\documentclass[sigconf,nonacm]{acmart}
 
\AtBeginDocument{%
  }
\usepackage{caption}
\usepackage{subcaption}
\usepackage{tikz}
\usetikzlibrary{arrows.meta, positioning, fit, backgrounds, shapes.geometric, calc, matrix, decorations.pathreplacing, shapes.arrows}
\usepackage{adjustbox}
\usetikzlibrary{positioning, shapes, arrows.meta, backgrounds}

\usepackage{amsthm}
\newtheorem{theorem}{Theorem}[section]
\newtheorem{definition}{Definition}[section]
\usepackage{filecontents}
\usepackage{hyperref}
\usepackage{booktabs}
\usepackage{multirow}
\usepackage{tabularx}
\definecolor{GainLight}{HTML}{D6EFEA}   
\definecolor{GainMed}{HTML}{6FBFA8}     
\definecolor{GainStrong}{HTML}{3DA088}  
\definecolor{LossLight}{HTML}{FBE3D0}   
\definecolor{LossMed}{HTML}{F5C293}     
\definecolor{LossStrong}{HTML}{EFA15F}  
\usepackage[table]{xcolor}      
 
\renewcommand\footnotetextcopyrightpermission[1]{} 
\begin{document}
 
\title{Finding Connections: Membership Inference Attacks for the Multi-Table Synthetic Data Setting}
 
\author{Joshua Ward}
\affiliation{%
  \institution{University of California, Los Angeles}
  \city{Los Angeles}
  \state{CA}
  \country{USA}
}
 
\author{Chi-Hua Wang}
\affiliation{%
  \institution{Purdue University}
  \city{West Lafayette}
  \state{IN}
  \country{USA}
}
 
\author{Guang Cheng}
\affiliation{%
  \institution{University of California, Los Angeles}
  \city{Los Angeles}
  \state{CA}
  \country{USA}
}

\renewcommand{\shortauthors}{Ward et al.}

\begin{abstract}
Synthetic tabular data has gained attention for enabling privacy-preserving data sharing. While substantial progress has been made in single-table synthetic generation where data are modeled at the row or item level, most real-world data exists in relational databases where a user's information spans items across multiple interconnected tables. Recent advances in synthetic relational data generation have emerged to address this complexity, yet release of these data introduce unique privacy challenges as information can be leaked not only from individual items but also through the relationships that comprise a complete user entity. 

To address this, we propose a novel Membership Inference Attack (MIA) setting to audit the empirical user-level privacy of synthetic relational data and show that single-table MIAs that audit at an item level underestimate user-level privacy leakage. We then propose Multi-Table Membership Inference Attack (MT-MIA), a novel adversarial attack under a No-Box threat model that targets learned representations of user entities via Heterogeneous Graph Neural Networks. By incorporating all connected items for a user, MT-MIA better targets user-level vulnerabilities induced by inter-tabular relationships than existing attacks. We evaluate MT-MIA on a range of real-world multi-table datasets and demonstrate that this vulnerability exists in state-of-the-art relational synthetic data generators, employing MT-MIA to additionally study where this leakage occurs. 
\end{abstract}

\begin{CCSXML}
<ccs2012>
   <concept>
       <concept_id>10002978.10002991.10002995</concept_id>
       <concept_desc>Security and privacy~Privacy-preserving protocols</concept_desc>
       <concept_significance>500</concept_significance>
       </concept>
   <concept>
       <concept_id>10002951.10002952.10002953.10002955</concept_id>
       <concept_desc>Information systems~Relational database model</concept_desc>
       <concept_significance>300</concept_significance>
       </concept>
   <concept>
       <concept_id>10010147.10010257.10010258.10010260</concept_id>
       <concept_desc>Computing methodologies~Unsupervised learning</concept_desc>
       <concept_significance>300</concept_significance>
       </concept>
 </ccs2012>
\end{CCSXML}

\ccsdesc[500]{Security and privacy~Privacy-preserving protocols}
\ccsdesc[300]{Information systems~Relational database model}
\ccsdesc[300]{Computing methodologies~Unsupervised learning}
\keywords{Membership Inference Attacks, Synthetic Data, Privacy Auditing, Relational Databases, Heterogeneous Graph Neural Networks, Differential Privacy}


\maketitle
\section{Introduction}

Synthetic tabular data has been shown to enable the sharing of sensitive or private information \cite{yoon2018pategan,yoon2020anonymization}. While considerable progress in synthetic data generation has focused on single table applications, where a generative model learns the distribution of a single table, most real world data exists in hierarchical structures stored in relational databases, where rows in one table have interdependencies with rows in other tables \cite{Fayyad_Piatetsky-Shapiro_Smyth_1996, MartnezCruz2012OntologiesVR, pmlr-v235-fey24a}. Modeling and producing synthetic databases rather than single tables has seen growing interest as it allows for the release of more expressive data and lately a variety of methods have been developed to learn and generate synthetic databases \cite{sdv, Padhi, Gueye, solatorio2023realtabformer, clavaddpm, hudovernik2025reldiffrelationaldatagenerative}.

While the results of these models are impressive, auditing the empirical privacy of synthetic database release is not well understood. Unlike single table settings that protect privacy at a row or item level, in relational databases a user's information is distributed across items in multiple interconnected tables. As we will show in Sections~\ref{subsec:motivexample} and~\ref{sec:exp}, privacy leakage that occurs over a particular item implies the leakage of all connected items.

Membership Inference Attacks (MIAs) have been successfully applied to audit the privacy of single table tabular generators and can be used to estimate the empirical $\epsilon$ differential privacy of generated synthetic data \cite{vanbreugel2023membership, groundhog}. However, we demonstrate that single table MIAs are inadequate for auditing multi table synthetic data privacy at the user level because they can only audit privacy at the item level. Current attacks, while applicable to item level privacy auditing in individual tables within a synthetic database, fail to exploit the critical inter tabular relationships that define a user across multiple tables, rendering them unsuitable for comprehensive user level privacy auditing in relational database contexts.

To our knowledge the first work to study this problem, we propose a novel Membership Inference Attack setting designed to audit the \textit{user level privacy} of multi table synthetic data release. Here, we construct relational databases as heterogeneous graphs in which we infer if a test \textit{subgraph} describing all connected information for a user was included in the training set of the generative model that produced the synthetic database. We then propose a new attack, Multi Table MIA (MT-MIA), which performs membership inference by learning graphical representations of user centric subgraphs using heterogeneous graph neural networks (HGNNs) \cite{10.1145/3292500.3330961, 10.1145/3308558.3313562, 10.1145/3366423.3380297, fey2023relationaldeeplearninggraph, yang2023simpleefficientheterogeneousgraph, robinson2024relbench} under a No-Box threat model.

Unlike existing tabular MIAs, MT-MIA leverages all relational information associated with a user, explicitly targeting inter-table conditional dependencies that are inaccessible to single table formulations. The attack is model agnostic and can be applied to arbitrary multi table datasets and synthetic data generators, making it broadly applicable to both practitioners and researchers.

To validate MT-MIA, we construct examples of multi table privacy leakage and show that current single table, item level attacks are no better than random guessing at user level membership inference in these scenarios whereas MT-MIA achieves near perfect AUC, highlighting the need for user level specific privacy auditing techniques. We then empirically deploy MT-MIA on a variety of real world multi table datasets, finding that current state of the art multi table generators possess this unique vulnerability, even under a conservative threat model. Finally, we analyze the intermediate embedding spaces of MT-MIA to show that MT-MIA can diagnose where memorization may be occurring in the multi table training set. Overall, these results demonstrate that effective privacy auditing for multi table generative models requires user level analyses, and that MT-MIA provides a practical mechanism for uncovering such leakage in real world settings.
\section{Related Works}
\subsection{Synthetic Data Generation and Release}

\textbf{Non-Relational Tabular Data}: The objective of synthetic data generation is to learn the probability distribution of a training dataset from which to generate new artificial samples that exhibit statistical properties similar to the original data. In the {non-relational tabular data} case, each row typically represents a complete entity (e.g., a single user with all their associated attributes), with each entity modeled as an independent observation or item associated with a fixed set of features. Techniques such as generative adversarial networks \cite{Xu2019ModelingTD,yoon2018pategan,yoon2020anonymization}, language models \cite{borisov2023languagemodelsrealistictabular, solatorio2023realtabformer}, and diffusion models \cite{tabddpm,autodiff,tabsyn}, have demonstrated an impressive ability to generate realistic and diverse synthetic data.

\textbf{Relational Tabular Data:} While single-table approaches are useful in certain applications, most data of release interest in healthcare, education, finance, and government are not stored as isolated tables but rather in relational databases \cite{MartnezCruz2012OntologiesVR,fey2023relationaldeeplearninggraph,pmlr-v235-fey24a}. 

\begin{definition}[Relational database]
\label{def:relational_db}
A relational database $\mathcal{D} = (\mathcal{T}, \mathcal{J})$ consists of a collection of tables $\mathcal{T} = \{T_1, \dots, T_n\}$ and joins or links between these tables $\mathcal{J} \subseteq \mathcal{T} \times \mathcal{T}$. Each table is a set $T' = \{o_1, \dots, o_n\}$ where the elements $o \in \mathcal{O}$ are referred to as rows or observations. Each observation is a tuple $o = (\mathcal{P}_o, \mathcal{K}_o, f_o)$ where:
\begin{itemize}
    \item $\mathcal{P}_o$ is the Primary Key that uniquely identifies the observation $o$.
    \item $\mathcal{K}_o$ is the set of Foreign Keys corresponding to a primary key in other tables, thus connecting the tables.
    \item $f_o$ corresponds to the features or columns of the observation $o$.
\end{itemize}
\end{definition}

In contrast to the single-table case, relational tabular data distributes information about a single user or entity across items in multiple tables connected through conditional joint relationships. As relational tabular data can be more expressive than its non-relational counterpart, a number of methods have been proposed to learn and generate synthetic relational data using probabilistic methods \cite{sdv, Gueye}, transformers \cite{Padhi}, latent diffusion \cite{clavaddpm}, and language models \cite{solatorio2023realtabformer}. A common theme across these approaches is to define chains of modular generators for each table based on the parent-child relationships implied by the join relationships in a database schema. The typical generation process begins by synthesizing data for parent tables, then recursively and conditionally generating their children while controlling cardinality and join relationships through histogram-based or clustering-based mechanisms.

The release of these data raises unique privacy challenges compared to the non-relational setting as the \textit{granularity} of the unit of privacy the releasing party wishes to protect can change. In single-table scenarios, one independent entity or user is a row or item. In contrast, relational data structures represent a user as a set of inter-related items. This interconnectedness means that protecting privacy is no longer confined to an \textit{item-level} but rather a \textit{user-level}. As we will show in Section  \ref{sec:setting}, leakage of an item or join relationship implies privacy leakage about all related rows for a user, and current item-level auditing procedures underestimate this risk.

\subsection{Membership Inference Attacks for Synthetic Data Generation}

 MIAs are a class of adversarial techniques that aim to distinguish between member records—those used in the training set of a target model—and nonmember records—those drawn from an independent dataset \cite{Shokri}. Originally proposed in the context of classification models \cite{Sablayrolles2019WhiteboxVB,Long,Carlini2021MembershipIA,watson2022on, ye2022,zarifzadeh2024lowcosthighpowermembershipinference}, MIAs exploit the tendency of learning algorithms to behave differently on training data than on unseen data, often due to overfitting or memorization. As a result, MIAs have become a central tool for empirically auditing privacy leakage in machine learning systems, complementing formal guarantees such as differential privacy. More recently, MIAs have been adapted to the setting of tabular synthetic data generation, where the adversary’s goal is to infer whether a record contributed to the training of a generative model based on access to synthetic samples. In this context, successful membership inference indicates that the synthetic data preserves information too faithfully, potentially enabling privacy violations even when direct record linkage is not possible.
\subsubsection{Single-Table MIAs}

Membership inference attacks against single-table synthetic data aim to determine whether a given real record influenced the training of a generative model, based solely on properties of the released synthetic dataset and, in some cases, limited auxiliary information. Unlike MIAs for discriminative models, where the attacker probes a target model's outputs directly, attacks in the synthetic data setting must infer membership indirectly through distributional artifacts left behind by the generation process.

A broad class of attacks operates using only the released synthetic table and, in some cases, auxiliary reference data, without access to the generative model. These attacks exploit the observation that records drawn from the training set often induce locally atypical behavior in the synthetic distribution, such as elevated neighborhood density, reduced variability, or near-duplicate synthetic samples. Operationally, they measure the extent a candidate record is memorized or overfit too using using distances, density estimates, or reconstruction scores to classify record membership \cite{Hayes2017LOGANMI,Hilprecht2019MonteCA,ganleaks,ward2024dataplagiarismindexcharacterizing,vanbreugel2023membership,Gen-LRA}. By relying only on released samples and optional auxiliary data, these methods avoid assumptions about access to the generative model. Consequently, they are typically classified as operating in a no-box threat model. This setting is particularly relevant for private data release, where a data curator is unlikely to disclose generative model information to a potential adversary.

Stronger attacks assume access to additional information, most commonly partial or query access to the generative model. These attacks explicitly compare how likely a target record is under competing member and non-member hypotheses, often by training multiple shadow generative models to approximate each hypothesis and learning a decision rule to distinguish between them \cite{groundhog,houssiau2022tapas,Meeus_2024}. While such attacks can substantially improve inference accuracy, they are typically extremely computationally expensive, requiring repeated model training or large numbers of model queries. As a result, their practical relevance to synthetic data release is less clear, since data curators generally do not expose generative models or provide the level of access required to support such attacks.

Across this literature, attacks are formulated to audit item-level privacy for non-relational, single-table generative models, implicitly assuming that each record corresponds to an independent, fixed-dimensional feature vector. This assumption enables distance and density-based attacks and ties attack effectiveness directly to the degree of over-representation or memorization of individual training records in the synthetic data. In contrast, multi-table synthetic data induces a hierarchical or relational representation in which membership corresponds to the presence of a training entity across multiple linked tables. Attacks in this setting must implicitly learn a representation over sets of rows or relational subgraphs, rather than operating on a single row input.

\subsubsection{Multi-Table MIAs}
Compared to the single-table setting, membership inference attacks for multi-table synthetic data remain relatively underexplored. To date, there has been limited effort to formalize a threat model or define a standard MIA setting of entity membership across multiple relational tables.

The primary empirical study in this setting is the MIDST Competition at SATML 2025 \cite{shafieinejad2026midstchallengesatml2025}, which evaluated membership inference attacks against multi-table synthetic data generated by ClavaDDPM under a range of white-box and black-box threat models. Despite access to multiple linked synthetic tables, the competition hosts noted that the strongest attacks relied exclusively on a designated main table, effectively reducing the problem to a single-table setting. Indeed, the winner of the competition ignored auxiliary tables entirely \cite{wu2025winningmidstchallengenew}. Attacks that attempted to incorporate information from auxiliary tables or relational structure were noted to not yield improved performance and that they often performed worse than single-table strategies.

Overall, existing empirical results suggest that current membership inference techniques do not successfully exploit relational dependencies in multi-table synthetic data, and that effective attacks in this setting have yet to be demonstrated.

\subsection{Heterogeneous Graph Neural Networks}
Heterogeneous Graph Neural Networks (HGNNs) provide the necessary inductive bias to preserve the multi-modal semantics of relational databases by explicitly modeling the distinct node and edge types inherent in database schemas. Unlike homogeneous GNNs, which treat all connections as semantically equivalent, HGNNs utilize type-specific transformation matrices and message-passing protocols to navigate the "web" of relational tables \cite{schlichtkrull2018modeling}. This capability is essential for membership inference in multi-table contexts, as privacy leakage often resides not in a single row, but in the specific structural alignment between a parent entity and its conditionally generated child records.

Early architectures in this domain, such as the Heterogeneous Graph Attention Network \cite{10.1145/3308558.3313562}, relied on hierarchical attention mechanisms—operating at both the node and semantic levels—to aggregate information across predefined meta-paths. However, the requirement for manual path engineering often limits their flexibility in complex database schemas. To address this, more recent frameworks like the Heterogeneous Graph Transformer \cite{10.1145/3366423.3380027} and the Graph Attention Transformer operator \cite{gatv2} introduce dynamic, typed-attention mechanisms that automatically learn the importance of different relational dependencies.

The recent formalization of Relational Deep Learning \cite{pmlr-v235-fey24a} further validates the use of HGNNs for database-centric tasks, demonstrating that structural representations can effectively capture the joint distributions of tabular data without the loss of information inherent in table flattening. MT-MIA leverages these advancements to perform privacy auditing, utilizing HGNNs to detect "memorized motifs"—instances where a synthetic generator reproduces training structures.

\section{A User-Level Membership Inference Attack Setting}
\label{sec:setting}

Releasing relational synthetic data implies a notion of privacy at the user-level which differs from the item-level privacy implied in the single-table setting. In this section, we develop a novel MIA setting to audit the empirical privacy of user entities and show that item-level privacy does not necessarily provide privacy protection at the user level, necessitating MT-MIA, a new membership inference technique proposed in Section \ref{sec:methodology} that follows this setting.
\subsection{Relational Databases as Heterogeneous Graphs}
\label{sec:rel_to_hg}

We represent a relational database $\mathcal{D}$ as a heterogeneous graph in order to reason about user level entities as connected subgraphs rather than isolated rows. This representation makes cross table dependencies explicit and is what allows our attack to operate on the full set of records belonging to a user.

\begin{definition}[Database as a heterogeneous graph]
\label{def:db_as_hg}
Given a database $\mathcal{D} = (\mathcal{T}, \mathcal{J})$, we construct a heterogeneous graph $G = (\mathcal{V}, \mathcal{E}, \mathcal{T}_V, \mathcal{T}_E, X_V)$ as follows:
\begin{itemize}
    \item Each table $T_i \in \mathcal{T}$ defines a node type $t_i \in \mathcal{T}_V$. Each row $o \in T_i$ becomes a node $v \in \mathcal{V}$, with type assigned by $\phi_V: \mathcal{V} \to \mathcal{T}_V$, $\phi_V(v) = t_i$.
    \item The non key features $f_o$ of a row form the node's attribute vector $x_v \in X_V$.
    \item Each Foreign Key relation in $\mathcal{J}$ defines an edge type $r \in \mathcal{T}_E$. A Foreign Key reference from row $u$ to row $v$ instantiates a directed, typed edge $e = (u, r, v) \in \mathcal{E}$.
\end{itemize}
\end{definition}

\noindent The schema is preserved as a typed meta structure: the set of admissible $(\phi_V(u), r, \phi_V(v))$ triples is exactly the set of Foreign Key relations in $\mathcal{D}$. A user entity, a set of rows transitively linked via Foreign Key relations, corresponds to a connected, typed subgraph of $G$. This graph perspective is particularly valuable for our membership inference context, as it enables us to trace information leakage across table boundaries and identify how patterns in relational data might reveal user membership despite protections that may be effective at the individual table level. For the remainder of the paper, we use graph language interchangeably with database language: nodes refer to rows, node types to tables, edges to Foreign Key references, and edge types to Foreign Key relations.

\begin{figure*}[t]
    \centering
    \begin{subfigure}[b]{0.24\textwidth}
        \centering
        \begin{tikzpicture}[
    customer/.style={circle, draw=black, thick, minimum size=0.8cm, fill=white},
    customer_used/.style={circle, draw=blue!70!black, thick, minimum size=0.8cm, fill=blue!20},
    transaction/.style={circle, draw=black, minimum size=0.6cm, fill=white},
    transaction_used/.style={circle, draw=blue!70!black, minimum size=0.6cm, fill=blue!20},
    transaction_faded/.style={circle, draw=gray!40, minimum size=0.6cm, fill=gray!10},
    edge/.style={->, >=Stealth, thick},
    edge_used/.style={->, >=Stealth, thick, blue!70!black},
    edge_faded/.style={->, >=Stealth, gray!40},
    title/.style={font=\bfseries\large},
    notation/.style={font=\small\ttfamily, align=left},
    label/.style={font=\small, align=center}
]
    \node[customer_used] (C1a) at (0, 0) {$C_1$};
    \node[transaction_faded] (T1a) at (2, 1) {$T_1$};
    \node[transaction_faded] (T2a) at (2, 0) {$T_2$};
    \node[transaction_faded] (T3a) at (2, -1) {$T_3$};
    
    \draw[edge_faded] (C1a) -- (T1a);
    \draw[edge_faded] (C1a) -- (T2a);
    \draw[edge_faded] (C1a) -- (T3a);
    
    \node[notation, anchor=north] at (1, -2) {$h^* = \{x_c \mid c \in \mathcal{C}\}$};
\end{tikzpicture}
        \caption{Customer Features Only}
        \label{fig:motive_a}
    \end{subfigure}
    \hfill
    \begin{subfigure}[b]{0.24\textwidth}
        \centering
        \begin{tikzpicture}[
    customer/.style={circle, draw=black, thick, minimum size=0.8cm, fill=white},
    customer_used/.style={circle, draw=blue!70!black, thick, minimum size=0.8cm, fill=blue!20},
    transaction/.style={circle, draw=black, minimum size=0.6cm, fill=white},
    transaction_used/.style={circle, draw=blue!70!black, minimum size=0.6cm, fill=blue!20},
    transaction_faded/.style={circle, draw=gray!40, minimum size=0.6cm, fill=gray!10},
    edge/.style={->, >=Stealth, thick},
    edge_used/.style={->, >=Stealth, thick, blue!70!black},
    edge_faded/.style={->, >=Stealth, gray!40},
    title/.style={font=\bfseries\large},
    notation/.style={font=\small\ttfamily, align=left},
    label/.style={font=\small, align=center}
]
    \node[customer_used] (C1c) at (0, 0) {$C_1$};
    \node[transaction_used] (T1c) at (2, 1) {$T_1$};
    \node[transaction_faded] (T2c) at (2, 0) {$T_2$};
    \node[transaction_faded] (T3c) at (2, -1) {$T_3$};
    
    \draw[edge_used] (C1c) -- (T1c);
    \draw[edge_faded] (C1c) -- (T2c);
    \draw[edge_faded] (C1c) -- (T3c);
    
    \node[notation, anchor=north] at (1, -2) {$h^* = \{(x_c, x_t) \mid t \in (\mathcal{T}_c)\}$};
\end{tikzpicture}
        \caption{Join to Single Transaction}
        \label{fig:motive_b}
    \end{subfigure}
    \hfill
    \begin{subfigure}[b]{0.24\textwidth}
        \centering
        \begin{tikzpicture}[
    customer/.style={circle, draw=black, thick, minimum size=0.8cm, fill=white},
    customer_used/.style={circle, draw=blue!70!black, thick, minimum size=0.8cm, fill=blue!20},
    transaction/.style={circle, draw=black, minimum size=0.6cm, fill=white},
    transaction_used/.style={circle, draw=blue!70!black, minimum size=0.6cm, fill=blue!20},
    transaction_faded/.style={circle, draw=gray!40, minimum size=0.6cm, fill=gray!10},
    edge/.style={->, >=Stealth, thick},
    edge_used/.style={->, >=Stealth, thick, blue!70!black},
    edge_faded/.style={->, >=Stealth, gray!40},
    title/.style={font=\bfseries\large},
    notation/.style={font=\small\ttfamily, align=left},
    label/.style={font=\small, align=center}
]
    \node[customer_used] (C1b) at (0, 0) {$C_1$};
    \node[transaction_faded] (T1b) at (2, 1) {$T_1$};
    \node[transaction_faded] (T2b) at (2, 0) {$T_2$};
    \node[transaction_faded] (T3b) at (2, -1) {$T_3$};
    
    \draw[edge_faded] (C1b) -- (T1b);
    \draw[edge_faded] (C1b) -- (T2b);
    \draw[edge_faded] (C1b) -- (T3b);
    
    \draw[densely dashed, blue!70!black, rounded corners=3pt] 
        (1.5, 1.5) rectangle (2.5, -1.5);
    \node[font=\scriptsize, blue!70!black] at (2, -1.8) {agg};
    
    \node[notation, anchor=north] at (1, -2) {$h^* = \{x_c \oplus \text{agg}(\{x_t\})\}$};
\end{tikzpicture}
        \caption{Transaction Aggregation}
        \label{fig:motive_c}
    \end{subfigure}
    \hfill
    \begin{subfigure}[b]{0.24\textwidth}
        \centering
        \begin{tikzpicture}[
    customer/.style={circle, draw=black, thick, minimum size=0.8cm, fill=white},
    customer_used/.style={circle, draw=blue!70!black, thick, minimum size=0.8cm, fill=blue!20},
    transaction/.style={circle, draw=black, minimum size=0.6cm, fill=white},
    transaction_used/.style={circle, draw=blue!70!black, minimum size=0.6cm, fill=blue!20},
    transaction_faded/.style={circle, draw=gray!40, minimum size=0.6cm, fill=gray!10},
    edge/.style={->, >=Stealth, thick},
    edge_used/.style={->, >=Stealth, thick, blue!70!black},
    edge_faded/.style={->, >=Stealth, gray!40},
    title/.style={font=\bfseries\large},
    notation/.style={font=\small\ttfamily, align=left},
    label/.style={font=\small, align=center}
]
    \node[customer_used] (C1d) at (0, 0) {$C_1$};
    \node[transaction_used] (T1d) at (2, 1) {$T_1$};
    \node[transaction_used] (T2d) at (2, 0) {$T_2$};
    \node[transaction_used] (T3d) at (2, -1) {$T_3$};
    
    \draw[edge_used] (C1d) -- (T1d);
    \draw[edge_used] (C1d) -- (T2d);
    \draw[edge_used] (C1d) -- (T3d);
    
    \node[notation, anchor=north] at (1, -2) {$h^* = (\mathcal{V}_C \cup \mathcal{V}_T, \mathcal{E})$};
\end{tikzpicture}
        \caption{MT-MIA (Ours)}
        \label{fig:motive_d}
    \end{subfigure}
        \caption{{Comparison of attack strategies for Section~\ref{subsec:motivexample}.} All methods operate on the same Customer-Transaction graph ($C_1$ with three transactions $T_1, T_2, T_3$). Blue highlighting indicates information used by each attack; gray indicates ignored or collapsed information. Single-table based attacks have three options in constructing their attack: \textbf{(a)} Use only customer features, ignoring all relational information. \textbf{(b)} Join each customer to a single transaction, requiring arbitrary sampling and discarding remaining relationships. \textbf{(c)} Joining customers to  arbitrarily aggregated transaction features, collapsing the relationship structure. \textbf{(d)} MT-MIA preserves and learns from the complete graph structure, automatically discovering that relationship cardinality reveals membership (AUC = 0.999).}
    \label{fig:motive_ex}
    \Description{Four schematic diagrams labeled (a) through (d), each showing a Customer-Transaction graph with one customer node C1 connected to three transaction nodes T1, T2, and T3. (a) Customer Features Only: only the customer node C1 is highlighted in blue; the three transaction nodes and their edges are gray, indicating that this attack uses only customer features and ignores all transaction data. (b) Join to Single Transaction: customer node C1 and one of the three transaction nodes are highlighted in blue, with the other two transactions and their edges in gray, showing that this attack arbitrarily selects one transaction per customer and discards the rest. (c) Transaction Aggregation: customer node C1 is highlighted in blue and connects to a single aggregation node, which itself connects to all three transaction nodes; the original transaction-to-customer edges are gray, indicating that the three transactions are collapsed into one aggregated representation. (d) MT-MIA: all four nodes (C1, T1, T2, T3) and all edges between them are highlighted in blue, indicating that the attack uses the complete graph structure including the cardinality of transactions per customer.}
    \end{figure*}
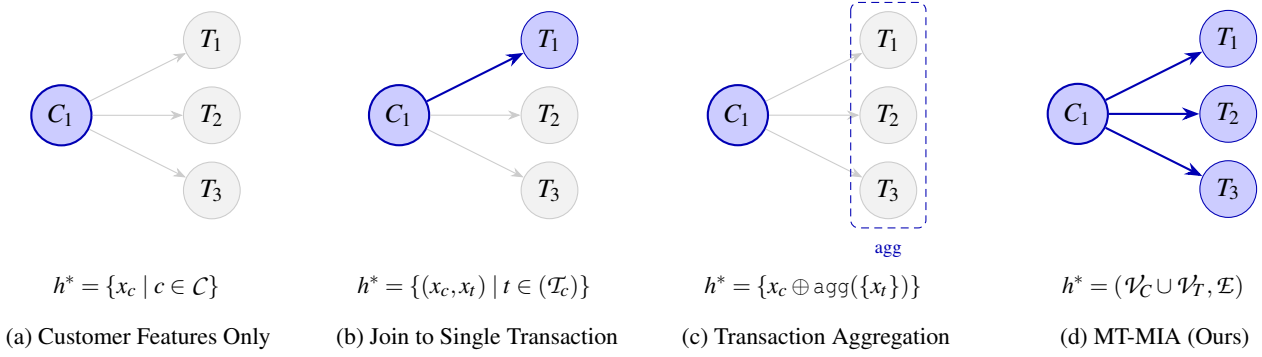

\subsection{Formalism}
\label{sec:formalism}

In the multi table synthetic data generation setting, a generative model $\mathcal{M}$ is trained on a heterogeneous graph $G_{\text{train}} \sim \mathbb{P}$ that is a random sample of the population. $\mathcal{M}$ is then sampled to produce a synthetic graph $G_{\text{synth}}$. We define an entity subgraph $h$ as a connected subgraph of $G$ corresponding to all rows associated with a single user. Following the classical Membership Inference game \cite{Shokri}, we define a test entity subgraph $h^*$ as either a subgraph of $G_{\text{train}}$ or a fresh sample from a holdout graph $G_{\text{holdout}} \sim \mathbb{P}$. Let $\mathcal{H}$ denote the set of all such test entity subgraphs. An adversary $\mathcal{A}: \mathcal{H} \to \{0, 1\}$, with access to $G_{\text{synth}}$ and perhaps other information defined by a threat model, aims to determine the membership for a given $h^*$. Formally, this Membership Inference Attack can be expressed as:\begin{equation}\label{eq:membership_prediction}
    \mathcal{A}(h^*) \;=\; \mathbb{I}\!\left\{ f(h^*) > \gamma \right\},
\end{equation}
where $\mathbb{I}$ is the indicator function, $f$ is a scoring function evaluated on $h^*$ and $G_{\text{synth}}$, and $\gamma$ is an adjustable decision threshold. The success of the attack can be measured using traditional binary classification metrics and can be interpreted as a measure of the privacy leakage introduced by the release of $G_{\text{synth}}$ sampled from the model trained on the original data.

This formulation tests whether $h^* \subseteq G_{\text{train}}$. It differs from traditional MIAs that test if an independent item or row was included in the training dataset; here, we evaluate whether the generative model preserves the privacy of all related nodes for a user $v \in h^*$ when released together.

 \textbf{Example}: Consider a relational database with two tables: \texttt{Customers} and \texttt{Transactions}, where each customer may place multiple distinct transactions (one-to-many relationship). This schema induces a heterogeneous graph in which each customer node connects through edges to a set of transaction nodes, forming a connected subgraph for each customer. In this setting, our task is to infer whether such a subgraph, representing a distinct customer and their full transaction history, was used for training.

\subsection{Privacy Auditing with Subgraphs}
\label{sec:auditing_subgraphs}

In theory, $h^*$ could be any subgraph of interest to an adversary or auditor, since if $h^* \subseteq G_{\text{train}}$, then $\forall g \subseteq h^*, g \subseteq G_{\text{train}}$. However, it is useful to add several conditions to simplify auditing procedures, as conducting membership inference on all possible constructions of $h^*$ is often computationally infeasible. First, we restrict $h^*$ to be a connected subgraph, as by the construction of relational databases two unconnected nodes imply independence. Second, many real world databases naturally decompose into disjoint connected subgraphs, where each subgraph represents a unit or entity, such as a customer and their associated transactions. We thus propose auditing a finite, well structured set of subgraphs by leveraging the relational decomposition of databases.

Let $H_{\text{test}} = \{ h_1, h_2, \dots, h_n \}$ denote the set of all disjoint connected entity subgraphs in $G_{\text{test}} = G_{\text{train}} \cup G_{\text{holdout}}$, where $h_i \subseteq G_{\text{test}}$ and $h_i \cap h_j = \emptyset$ for $i \neq j$. We propose to audit membership specifically over $H_{\text{test}}$. This formulation and these assumptions lead to the following result:

\begin{theorem}
\label{thm:leakage}
Let $G_{\text{test}} = (V, E)$ be a graph that is the disjoint union of two subgraphs $G_{\text{train}}$ and $G_{\text{holdout}}$, where $V(G_{\text{test}}) = V(G_{\text{train}}) \cup V(G_{\text{holdout}})$ and $V(G_{\text{train}}) \cap V(G_{\text{holdout}}) = \emptyset$. Furthermore, there are no edges in $G$ connecting vertices between $G_{\text{train}}$ and $G_{\text{holdout}}$ (i.e., all edges in $G$ have both endpoints in either $G_{\text{train}}$ or in $G_{\text{holdout}}$). Let $h^* \subseteq G$ be a connected subgraph, and let $g \subseteq h^*$.

Then, if $g \subseteq G_{\text{train}}$, it follows that $h^* \subseteq G_{\text{train}}$. Likewise, if $g \subseteq G_{\text{holdout}}$, then $h^* \subseteq G_{\text{holdout}}$.
\end{theorem}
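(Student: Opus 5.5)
The plan is a connectivity argument: membership in $G_{\text{train}}$ propagates along edges of $h^*$. First I make explicit two points implicit in the statement. The subgraph $g$ must be nonempty, i.e.\ $V(g)\neq\emptyset$, since otherwise $g$ lies in both parts vacuously and the claim fails. And "disjoint union" means $E(G_{\text{test}}) = E(G_{\text{train}}) \sqcup E(G_{\text{holdout}})$, so every edge of $G_{\text{test}}$ with both endpoints in $V(G_{\text{train}})$ is an edge of $G_{\text{train}}$. I will prove the $G_{\text{train}}$ case; the $G_{\text{holdout}}$ case follows by swapping the roles of the two parts, since the hypotheses are symmetric.

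\textbf{Step 1 (a seed vertex).} Since $g \subseteq G_{\text{train}}$ and $V(g)\neq\emptyset$, fix a vertex $v_0 \in V(g) \subseteq V(h^*)$ with $v_0 \in V(G_{\text{train}})$.

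\textbf{Step 2 (all vertices of $h^*$ lie in $G_{\text{train}}$).} Let $w \in V(h^*)$ be arbitrary. Because $h^*$ is connected, there is a path $v_0 = u_0, u_1, \dots, u_k = w$ using edges of $h^* \subseteq G_{\text{test}}$. I induct on $i$ to show $u_i \in V(G_{\text{train}})$. The base case is Step 1. For the inductive step, the edge $\{u_i,u_{i+1}\}$ lies in $E(G_{\text{test}})$, so by the no-crossing-edge hypothesis both endpoints lie in the same part. Since $u_i \in V(G_{\text{train}})$ and the vertex sets are disjoint, $u_{i+1} \in V(G_{\text{train}})$. Hence $V(h^*) \subseteq V(G_{\text{train}})$. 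For directed edges, as in Definition~\ref{def:db_as_hg}, I use weak connectivity, which is what "connected" means here; the argument ignores edge orientation.

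\textbf{Step 3 (all edges of $h^*$ lie in $G_{\text{train}}$).} Every edge $e \in E(h^*)$ is an edge of $G_{\text{test}}$ whose endpoints lie in $V(G_{\text{train}})$ by Step 2. By the edge partition, $e \in E(G_{\text{train}})$, so $h^* \subseteq G_{\text{train}}$.

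The only real obstacle is not the induction but stating the hypotheses precisely. Two things must be fixed. First, $g$ must be nonempty. Second, $G_{\text{train}}$ and $G_{\text{holdout}}$ together must account for all edges of $G_{\text{test}}$, so that they behave as induced subgraphs on their vertex sets. I will state both at the start of the proof and note that they hold in the auditing setting of Section~\ref{sec:auditing_subgraphs}, where $G_{\text{test}} = G_{\text{train}} \cup G_{\text{holdout}}$.
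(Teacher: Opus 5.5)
Your proof is correct and uses the same idea as the paper's. The paper argues by contradiction that a path in $h^*$ from a vertex of $g$ to a hypothetical vertex of $h^*$ in $G_{\text{holdout}}$ would have to use a crossing edge, which is your path induction run in reverse. Your explicit hypotheses $V(g)\neq\emptyset$ and the edge partition used in your Step 3 make precise two points the paper's proof uses tacitly: that some vertex $u \in V(g)$ exists, and that $h^* \not\subseteq G_{\text{train}}$ forces a vertex of $h^*$ to lie in $G_{\text{holdout}}$.
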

A proof is included in Appendix~\ref{app:proofs}. As a sketch, by construction $h^*$ cannot have nodes nor edges that connect to nodes in both $G_{\text{train}}$ and $G_{\text{holdout}}$. This exclusivity establishes that the membership of all nodes in $g$ and $h^*$ cannot differ. An immediate corollary is that under the disjointness assumption, $h^* \subseteq G_{\text{train}}$ and $h^* \subseteq G_{\text{holdout}}$ are mutually exclusive: every candidate subgraph in $H_{\text{test}}$ has a well defined membership label.

We note that this auditing setup aligns with how multi table synthetic data generators are trained in practice. Generators must be fit on entity subgraphs that preserve the full set of rows belonging to each user, because the joint distribution of a user across tables is precisely what they are designed to model; partial subgraphs would distort the conditional dependencies between parent and child rows and yield a generator that is unfaithful to the source distribution. Membership at the level of complete entity subgraphs is therefore the natural unit of inference for relational synthetic data release.

\textbf{Consequence of Multi Table Synthetic Data Release.} While the assumption of Theorem~\ref{thm:leakage} is not strictly required for relational data MIAs, it emphasizes the privacy risk of relational synthetic data release: if privacy leakage occurs over any component of a connected subgraph, it implies that the entire subgraph (and all included nodes) must be a member of the same source graph. In other words, it is not enough to protect the privacy of the observations of one individual table, as \textit{all connected information can risk membership inference}.

\subsection{Threat Model}
\label{sec:threat_model}

In this work, we explore membership inference attacks under a No-Box threat model. In the No-Box setting, the attacker has access only to a single synthetic dataset $G_{\text{synth}}$ as well as a database schema and must reason about membership without any knowledge of the generator architecture, training procedure, or internal parameters. This threat model reflects scenarios where a party has published a synthetic dataset in isolation with no additional model implementation details, and an adversary must assess privacy risks based solely on patterns and statistical properties present in $G_{\text{synth}}$.

While a variety of other threat models have been studied for synthetic tabular data, including Calibrated No-Box where the adversary has an additional reference dataset \cite{vanbreugel2023membership, Gen-LRA} and Shadow-Box where an adversary additionally has implementation knowledge of the generator in order to generate shadow models \cite{groundhog,houssiau2022tapas,Meeus_2024}, No-Box is the threat model that most closely matches how synthetic relational data is released in practice. In typical deployments, a data curator publishes a synthetic dataset without releasing the generator, its training data, or any auxiliary reference data, and an auditor or adversary must reason about membership from the synthetic dataset alone. The other threat models impose assumptions that are implausible in the multi table setting:
\begin{itemize}
    \item \textbf{Calibrated No-Box} assumes the adversary additionally holds a fresh sample from the same population distribution as the training data, which in the relational setting requires an entire reference database that faithfully reproduces the joint distribution over node attributes and edge relationships across all tables. This is a substantially stronger assumption than its single table analog and is implausible in the settings synthetic relational data release is meant to enable, such as healthcare and finance, where the whole reason to release synthetic data is that comparable real data is not available.
    \item \textbf{Shadow-Box} grants the adversary knowledge of the model implementation along with a reference dataset, enabling the construction of shadow models. This is even more implausible in relational synthetic data release: such attacks are trivially defeated by not publishing implementation details, and are computationally infeasible for modern relational generators. In our experimentation, a single training run of RelDiff took 48 hours on an H200 to converge under default hyperparameters.
\end{itemize}
\noindent While No-Box attacks represent a lower bound on the privacy leakage detectable under more powerful threat models, this setting is most operationally relevant as it is the most realistic adversarial setting for released synthetic relational dataset in practice.

Additionally, a No-Box threat model also allows MT-MIA to be both model agnostic and dataset agnostic, enabling straightforward auditing of newly proposed relational data generators as they are developed. Because the attack operates solely on the released synthetic data and schema, it does not require adaptation to generator specific interfaces or assumptions, making it applicable in post hoc privacy evaluations where only the synthetic dataset is available.

\subsection{Motivating Example: The Need for Multi-Table Attacks}\label{subsec:motivexample}

To illustrate how inter-table dependencies can leak privacy, we construct a toy example (full experimental details are included in the Appendix). Consider a database with two tables: \textit{Customers} and \textit{Transactions}, connected by a one-to-many relationship where multiple transactions belong to a single customer. Both tables have identical feature distributions following $\mathcal{N}(0, I)$. 

In our constructed scenario, non-member samples contain exactly one transaction per customer, while member samples (training data) contain 100 transactions per customer—a pattern that might arise from data drift or sampling bias. A synthetic data generator produces $G_{\text{synth}}$ with the same distributional properties as the training set. Our goal is to infer membership of customer entities based solely on the release of $G_{\text{synth}}$ under a No-box threat model.

From an adversarial perspective, this leakage is trivial to exploit: a decision rule that predicts membership based on whether a customer has 100 transactions achieves perfect accuracy. However, existing single-table MIAs applied to this scenario such as Distance to Closest Record \cite{ganleaks} and MC \cite{Hilprecht2019MonteCA} fail to detect this leakage. As shown in Figure~\ref{fig:motive_ex}, single-table approaches must make arbitrary choices about how to incorporate multi-table information. A practitioner using existing single-table MIAs would typically either (Figure~\ref{fig:motive_a}) ignore transaction data entirely , (Figure~\ref{fig:motive_b}) join each customer to a single arbitrarily-chosen transaction , or (Figure~\ref{fig:motive_c}) aggregate transaction features (e.g., computing means or sums). While aggregation approaches \textit{could} capture this signal (such as counting the transactions), this requires apriori adversarial knowledge of the leakage as to which relational aspects matter—an assumption that does not scale to complex schemas or subtle leakage patterns.

In contrast (See Figure~\ref{fig:motiv_ex_tpr}), MT-MIA automatically learns from the full relational structure (Figure~\ref{fig:motive_d}) and achieves an AUC of 0.999 without manual feature engineering. While this extreme example is unlikely in practice, it exemplifies a fundamental principle: \textbf{inter-table relationships can leak membership signal}, and user-level adversarial auditing must account for the full multi-table structure.

\section{Methodology: MT-MIA}
\label{sec:methodology}

The motivating example in Section~\ref{subsec:motivexample} demonstrates a fundamental vulnerability in tabular synthetic data auditing: \textit{single table MIAs are topologically limited}. Even when a generator overfits to or memorizes inter-table correlations or cardinality, single table attacks fail to capture this signal because they lack a mechanism to process non Euclidean relational dependencies.

To address this, we propose {Multi Table Membership Inference Attack} (MT-MIA). MT-MIA is designed to be \textit{schema agnostic}, utilizing a heterogeneous graph encoder to map complex relational structures into a low dimensional embedding space. Instead of relying on manual feature engineering or arbitrary aggregation, MT-MIA leverages graph representation learning to identify discriminative structural patterns directly from $G_{\text{synth}}$.

The intuition for MT-MIA is that by incorporating all of an entity subgraph's information into the learned embedding space, the resulting embeddings will be more discriminative than any individual table's representation alone. We first describe the HGNN backbone that maps relational structures into a latent space, then explain how we train the model and score the membership of target subgraphs. Notation introduced throughout this section is summarized in Appendix~\ref{app:notation}.

\subsection{Graph Encoder Architecture}
\label{sec:encoder}

The core of MT-MIA is a heterogeneous graph encoder $\mathcal{M}_\theta$, parameterized by learnable weights $\theta$ (see Figure~\ref{fig:mt-mia-arch}). The HGNN architecture provides the \textit{topological inductive bias} required to model relational dependencies. Unlike traditional autoencoders, $\mathcal{M}_\theta$ is invariant to specific join relationships as well as table and row cardinality, enabling a unified attack interface across any heterogeneous relational schema.

\paragraph{Heterogeneous Message Passing}
To capture the semantics of the relational schema, we stack $L$ heterogeneous message passing layers. For a node type $t \in \mathcal{T}_V$ at layer $l$, the layer's node feature update is:
\begin{equation}
H^{(t)}_{l+1} = \Phi_l^{(t)}\!\left( H^{(t)}_l, \{ A^{(r)} \}_{r \in \mathcal{T}_E} \right),
\end{equation}
where $H^{(t)}_l$ is the matrix of layer $l$ node embeddings for nodes of type $t$, $\Phi_l^{(t)}$ is a type specific message passing function, and $A^{(r)}$ is the adjacency matrix for relation type $r$. Throughout the paper, we instantiate $\Phi_l^{(t)}$ as GATv2 \cite{gatv2} due to its expressive attention mechanism. This mechanism enables information propagation both within and across node types by leveraging the typed edges in the heterogeneous subgraph.

\begin{figure}
    \centering
    \includegraphics[width=1\linewidth]{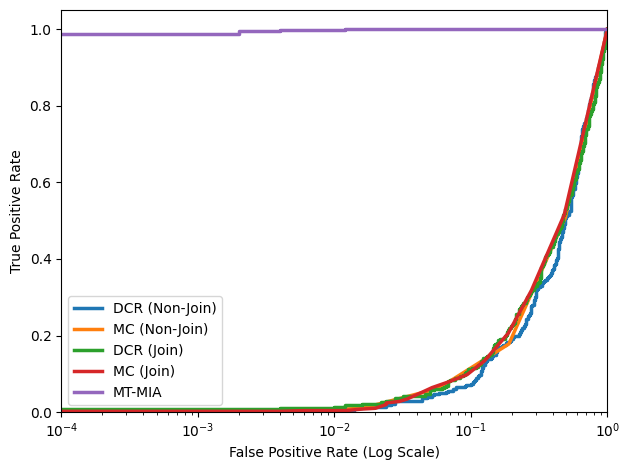}
    \caption{Plot of the True Positive Rate by log-scaled False Positive Rate for Section~\ref{subsec:motivexample}. Conventional single-table attacks Distance to Closest Record \cite{ganleaks} and MC \cite{Hilprecht2019MonteCA} cannot distinguish membership from the Customers table with or without additional Transaction rows joined as they cannot exploit the given inter-tabular leakage. MT-MIA learns and attacks a representation of the entire user subgraph allowing for nearly perfectly membership discrimination (AUC=0.999).}
    \Description{A line plot showing True Positive Rate on the y-axis (linear scale, 0 to 1) versus False Positive Rate on the x-axis (logarithmic scale, from 10 to the negative 4 to 1). Five attack methods are plotted as separate lines: DCR Non-Join, MC Non-Join, DCR Join, MC Join, and MT-MIA. The four single-table baselines (DCR Non-Join, MC Non-Join, DCR Join, MC Join) overlap closely along a diagonal line that rises slowly from low True Positive Rate at low False Positive Rate, indicating performance comparable to random guessing. The MT-MIA line sits at a True Positive Rate near 1.0 across the entire False Positive Rate range, including at the lowest False Positive Rates, indicating near-perfect membership discrimination.}
    \label{fig:motiv_ex_tpr}
\end{figure}

\paragraph{Dynamic Gated Fusion}
While the bifurcated signals provide a comprehensive view of the entity, the discriminative properties of these embeddings are unknown to the adversary. Privacy leakage may manifest in the unconditionally generated attributes ($z_{\text{parent}}$), the conditionally generated relational dependencies ($z_{\text{context}}$), or a latent intersection of both. To address this uncertainty, we employ a \textit{Dynamic Gating Unit} that adaptively modulates the integration of these signals.

We compute a learned gating vector $g \in [0,1]^d$ that serves as an entry wise modulator for relational influence. Given the parent and context embeddings, the gate is formulated as:
\begin{equation}
g = \sigma\!\left( \text{MLP}_{\text{gate}}([z_{\text{parent}} \parallel z_{\text{context}}]) \right),
\end{equation}
where $\sigma$ is the elementwise sigmoid activation and $\text{MLP}_{\text{gate}}$ is a small multilayer perceptron applied to the concatenation $[z_{\text{parent}} \parallel z_{\text{context}}]$. The final composite representation $z_{\text{final}}$ is then constructed via a gated residual connection:
\begin{equation}\label{eq:z_final}
z_{\text{final}} = z_{\text{parent}} + (g \odot \varphi(z_{\text{context}})),
\end{equation}
where $\varphi(\cdot)$ is a non linear transformation and $\odot$ denotes the Hadamard (elementwise) product. This mechanism allows $\mathcal{M}_\theta$ to ``tune'' the attack's sensitivity: it can prioritize intrinsic parent features when relational context is sparse, or amplify the structural signal when the generator exhibits strong conditional leakage. By allowing the data to determine the optimal weight of each signal, the encoder remains robust across various generative architectures and database schemas.

\begin{figure}
\centering
\adjustbox{max width=\columnwidth}{\begin{tikzpicture}[
scale=0.75,
transform shape,
node distance=1.2cm,
>=Stealth,
box/.style={draw, rounded corners=3pt, minimum width=1.5cm, minimum height=0.8cm, text centered, font=\scriptsize},
block/.style={box, fill=blue!5},
encoderA/.style={box, fill=red!20},
encoderB/.style={box, fill=blue!20},
encoderC/.style={box, fill=green!20},
arrow/.style={single arrow, single arrow head extend=0.15cm, draw, minimum height=0.8cm, minimum width=1cm, fill=gray!10},
pool/.style={circle, draw, minimum size=0.9cm, fill=purple!5, font=\tiny, align=center},
gate/.style={circle, draw, minimum size=1.1cm, fill=orange!10, align=center, font=\tiny},
arr/.style={->, thick},
att/.style={<->, dashed, thick, draw=gray!60},
nodeA/.style={circle, fill=red!60, minimum size=0.25cm},
nodeB/.style={circle, fill=blue!60, minimum size=0.25cm},
nodeC/.style={circle, fill=green!60, minimum size=0.25cm},
edge/.style={-, thick},
connection/.style={circle, fill=black, minimum size=0.2cm, inner sep=0}
]
\node[draw, rounded corners, minimum width=2.2cm, minimum height=1.6cm] (input) {};
\node[nodeA] (a1) at ($(input)+(-0.7,0.4)$) {};
\node[nodeA] (a2) at ($(input)+(-0.7,-0.0)$) {};
\node[nodeA] (a3) at ($(input)+(-0.7,-0.4)$) {};
\node[nodeB] (b1) at ($(input)+(0.0,0.4)$) {};
\node[nodeB] (b2) at ($(input)+(.0,-0.0)$) {};
\node[nodeC] (c1) at ($(input)+(.7,0.4)$) {};
\node[nodeC] (c2) at ($(input)+(.7,0)$) {};
\draw[edge] (a1) -- (b1);
\draw[edge] (a2) -- (b2);
\draw[edge] (a3) -- (b2);
\draw[edge] (b1) -- (c1);
\draw[edge] (b1) -- (c2);
\draw[edge] (b2) -- (c2);
\node[connection, right=1cm of input] (arr1) {};
\node[encoderA, right=1.8cm of arr1, yshift=2cm] (encA) {$\text{Enc}_A$};
\node[encoderB, right=1.8cm of arr1] (encB) {$\text{Enc}_B$};
\node[encoderC, right=1.8cm of arr1, yshift=-2cm] (encC) {$\text{Enc}_C$};
\draw[att] (encA) -- (encB);
\draw[att] (encB) -- (encC);
\draw[att] (encA) to[bend right=50] (encC);
\node[pool, right=1.2cm of encB, yshift=-1cm] (poolBC) {Attn\\Pool};
\node[gate, right=3.8cm of encB] (fusion) {Gated\\Fusion};
\node[right=0.8cm of fusion] (zfinal) {$z_{final}$};
\draw[arr] (input) -- (arr1);
\draw[arr] (arr1) |- (encA);
\draw[arr] (arr1) -- (encB);
\draw[arr] (arr1) |- (encC);
\draw[arr] (encA) -| node[pos=0.4, above, font=\small] {$z_{parent}$} (fusion);
\draw[arr] (encB) -- ($(encB)+(0.8,0)$) |- (poolBC);
\draw[arr] (encC) -- ($(encC)+(0.8,0)$) |- (poolBC);
\draw[arr] (poolBC) -| node[pos=0.4, below, font=\small, xshift=-0.2cm] {$z_{context}$} (fusion);
\draw[arr] (fusion) -- (zfinal);
\begin{pgfonlayer}{background}
\node[draw, thick, dashed, rounded corners, inner sep=10pt, fill=gray!3, fit=(encA) (encB) (encC)] {};
\end{pgfonlayer}
\node[above=0.1cm of input, font=\small] {$h^*$};
\node[above=0.5cm of encA, font=\small] {Message Passing HGNN};
\end{tikzpicture}}
\caption{Inference Time Architecture Diagram of MT-MIA. }
\label{fig:mt-mia-arch}
\Description{An architecture diagram showing the inference pipeline of MT-MIA. The input on the left is a candidate entity subgraph h-star, depicted as multiple connected nodes of different shapes representing different node types. The subgraph feeds into three parallel encoder blocks labeled Encoder A, Encoder B, and Encoder C, which together form the Message Passing HGNN stage. The encoders pass their outputs into an Attention Pooling block. The pooling block produces two intermediate embeddings: a parent embedding labeled z-parent and a context embedding labeled z-context. Both embeddings are passed into a Gated Fusion block, which combines them into the final output embedding labeled z-final.}
\end{figure}
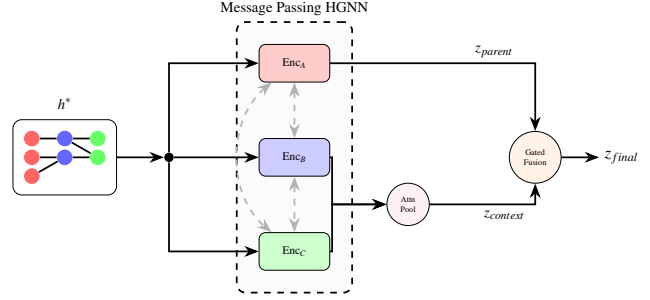
\subsection{Training in the No-Box Setting}
\label{sec:training}

In the No-Box setting, the adversary has access only to the synthetic output $G_{\text{synth}}$ without any knowledge of the generator's architecture, parameters, or training data $G_{\text{train}}$. Prior work in the single table setting \cite{ganleaks, Hilprecht2019MonteCA} establishes that synthetic data generators tend to leave detectable traces of memorization in their outputs: training records influence the synthetic distribution in ways that produce locally elevated density, near duplicate samples, or reduced reconstruction error around member records. Distance based attacks operationalize this observation by treating proximity between a candidate record and the synthetic dataset as evidence of membership. MT-MIA extends this principle to the relational setting: rather than measuring proximity in the raw feature space of a single row, we measure proximity in a learned embedding space that summarizes an entire entity subgraph, allowing the attack to detect memorization that manifests across connected rows.

\paragraph{Multi Anchor Reconstruction Objective}
Without access to ground truth membership labels, we train $\mathcal{M}_\theta$ using a self supervised reconstruction objective. The intuition is to force the encoder to learn a compressed representation that preserves both the primary entity's attributes and its relational neighborhood, which are the components that generators may inadvertently memorize during training.

We introduce two complementary reconstruction heads: a \textit{parent decoder} $\mathcal{D}_{\phi}$ with parameters $\phi$ that reconstructs the parent node's features, and a \textit{context decoder} $\mathcal{D}_{\psi}$ with parameters $\psi$ that reconstructs the aggregate neighborhood. For a target node $i \in \mathcal{V}_t$ with neighbor set $\mathcal{N}(i)$ spanning all adjacent node types, the composite reconstruction loss is:
\begin{equation}
\begin{aligned}
\mathcal{L}_\text{recon}(\theta, \phi, \psi) = \mathbb{E}_{i \sim \mathcal{V}_t} \bigg[ 
&\lambda_p \underbrace{\| \mathcal{D}_{\phi}(\mathcal{M}_\theta(x_i)) - \mathbf{x}_i^{(t)} \|^2_2}_{\text{Parent Recon.}} \\
+ &\lambda_c \underbrace{\Big\| \mathcal{D}_{\psi}(\mathcal{M}_\theta(x_i)) - \sum_{j \in \mathcal{N}(i)} \mathbf{x}_j \Big\|^2_2}_{\text{Context Recon.}} \bigg],
\end{aligned}
\end{equation}
where $\mathcal{M}_\theta(x_i)$ produces the fused embedding from Equation~\ref{eq:z_final} and $\lambda_p, \lambda_c \in \mathbb{R}_{\geq 0}$ are hyperparameters that balance the reconstruction of parent features against relational context. By learning to reconstruct both signals from a single bottleneck embedding, the encoder is forced to capture the specific relational motifs and conditional dependencies favored by the generator. Records exhibiting similar motifs produce embeddings that cluster near $G_{\text{synth}}$ in this learned space.

\paragraph{Membership Scoring}
While the learned embedding space can support various No-Box attacks, we derive our membership scoring from the Distance to Closest Record (DCR) attack \cite{ganleaks}. For a candidate entity subgraph $h^*$, we define $f(h^*)$ from Equation~\ref{eq:membership_prediction} as:
\begin{equation}
f(h^*) = - \min_{h \subseteq G_{\text{synth}}} \| \mathcal{M}_\theta(h^*) - \mathcal{M}_\theta(h) \|_2.
\end{equation}
Higher scores indicate that the structural motifs of $h^*$ were likely memorized and reproduced in $G_{\text{synth}}$.

\section{Experiments}
\label{sec:exp}
To evaluate the effectiveness of MT-MIA, we conduct a series of experiments on three benchmark multi-table datasets: California Census \cite{ipums_international_v7.3}, Airline Customers \cite{airline_loyalty_kaggle_impact}, and Airbnb \cite{airbnb_recruiting_new_user_bookings}. For each dataset, we begin by constructing training and holdout sets through the sampling of disjoint subgraphs, ensuring no overlap in entities or relationships. The synthetic data generator is then trained with the training subgraphs, after which we sample an equal number of synthetic subgraphs to match the original training size. All numeric features are scaled and categorical features are ordinally encoded in relation to the synthetic data, which are then applied consistently to both real and synthetic samples to prevent data leakage. 

We experiment with a training size of 1000 user subgraphs. These subgraphs correspond to thousands of items across all datasets' tables. To account for randomness in model training and sampling, each experimental configuration is repeated across three independent runs. Following the recommendations of prior work \cite{guépin2024lostaveragesnewspecific}, we fix the data split (training vs. holdout) across all runs and vary only the generative model initialization seeds. This design helps isolate the variability due to model behavior from that due to evaluation set construction, which is especially important in privacy attack scenarios.

Following \cite{vanbreugel2023membership}, \cite{Synth_MIA} and \cite{ward2024dataplagiarismindexcharacterizing}, all training data are included as the positive membership class with an equal sized holdout dataset as a negative class. All MIAs are then evaluated with the corresponding synthetic data on this evaluation set to then calculate the success of the attack. 

We run all experiments on a High Performance Computing Cluster using a Nvidia H200 GPU with a 32 core CPU. The full synthetic data generation procedure for the models RealTabFormer and ClavaDDPM was approximately 10 hours of compute on this system. For the much larger and more expensive RelDiff, the procedure was approximately 450 GPU hours. The MT-MIA training and inference procedure was approximately 1 hour of compute time over all runs. Additional compute was used for preliminary experiments.

\subsection{Baselines} 
\subsubsection{Multi-Table Synthetic Data Generators}
We evaluate our proposed approach against three representative multi relational generative models that span autoregressive sequence modeling, hierarchical diffusion, and graph structured diffusion approaches. 
\begin{itemize}
    \item \textbf{RealTabFormer \cite{solatorio2023realtabformer}:} This model synthesizes multi-relational data by framing child table generation as a conditional sequence generation task. It utilizes a GPT-based architecture where parent records are encoded to form a context for a sequence-to-sequence (Seq2Seq) model. The generator produces child rows while maintaining one-to-many relationship cardinality by treating primary-key-foreign-key links as causal sequences. This approach models conditional distributions without requiring manual schema flattening.
    
    \item \textbf{ClavaDDPM \cite{clavaddpm}:} This framework generates multi-relational data through a hierarchical guidance mechanism using cluster-based latent variables. The model applies a clustering algorithm to the parent table to extract latent representations, which serve as conditioning signals for the diffusion process of the associated child tables. During the reverse denoising step, the model optimizes a conditional objective function to align child records with parent clusters. This structure captures dependencies across the database schema without the computational overhead of autoregressive or graph-based methods.

    \item \textbf{RelDiff \cite{hudovernik2025reldiffrelationaldatagenerative}:} This framework synthesizes complete relational databases by explicitly modeling their Foreign Key graph structure. RelDiff decomposes generation into two stages: a joint graph conditioned diffusion process that synthesizes attributes across all tables simultaneously, and a Stochastic Block Model based graph generator that synthesizes the Foreign Key structure itself. This decomposition of graph structure from relational attributes is designed to preserve both fidelity and referential integrity, avoiding the structural assumptions imposed by methods that flatten relational data into conditionally generated tables.    
\end{itemize}
\begin{table*}[t]
  \centering
  \small
  \setlength{\tabcolsep}{4pt}
  \caption{Per-(model, dataset) comparison of MT-MIA against the best baseline attack. All values are means across seeds. For each pair, the larger value is \textbf{bold}; $\Delta$ reports the absolute gain of MT-MIA over the baseline. A table with standard deviations are reported in Appendix~\ref{app:add-results}.}
  \label{tab:mt-mia-vs-baseline}
  \begin{tabular}{llccccccccc}
    \toprule
    Model & Metric & \multicolumn{3}{c}{California} & \multicolumn{3}{c}{Airbnb} & \multicolumn{3}{c}{Airlines} \\
    \cmidrule(lr){3-5} \cmidrule(lr){6-8} \cmidrule(lr){9-11}
     &  & Baseline & MT-MIA & $\Delta$ & Baseline & MT-MIA & $\Delta$ & Baseline & MT-MIA & $\Delta$ \\
    \midrule
    \textsc{ClavaDDPM} & AUC-ROC & \textbf{0.79} & 0.69 & \cellcolor{LossStrong}$-$0.10 & 0.79 & \textbf{0.80} & \cellcolor{GainLight}+0.01 & \textbf{0.69} & 0.66 & \cellcolor{LossLight}$-$0.03 \\
     & TPR@FPR$=$0 & 0.00 & \textbf{0.07} & \cellcolor{GainMed}+0.07 & 0.00 & \textbf{0.01} & \cellcolor{GainLight}+0.01 & 0.07 & \textbf{0.17} & \cellcolor{GainStrong}+0.10 \\
     & TPR@FPR$=$$10^{-3}$ & 0.01 & \textbf{0.09} & \cellcolor{GainMed}+0.08 & 0.01 & 0.01 & 0.00 & 0.08 & \textbf{0.21} & \cellcolor{GainStrong}+0.13 \\
     & TPR@FPR$=$$10^{-2}$ & 0.11 & \textbf{0.15} & \cellcolor{GainLight}+0.04 & 0.06 & \textbf{0.09} & \cellcolor{GainLight}+0.03 & 0.12 & \textbf{0.31} & \cellcolor{GainStrong}+0.19 \\
    \midrule
    \textsc{RelDiff} & AUC-ROC & \textbf{0.67} & 0.64 & \cellcolor{LossLight}$-$0.03 & 0.57 & \textbf{0.62} & \cellcolor{GainMed}+0.05 & \textbf{0.51} & 0.49 & \cellcolor{LossLight}$-$0.02 \\
     & TPR@FPR$=$0 & 0.00 & \textbf{0.15} & \cellcolor{GainStrong}+0.15 & 0.00 & \textbf{0.03} & \cellcolor{GainLight}+0.03 & 0.00 & 0.00 & 0.00 \\
     & TPR@FPR$=$$10^{-3}$ & 0.01 & \textbf{0.17} & \cellcolor{GainStrong}+0.16 & 0.00 & \textbf{0.03} & \cellcolor{GainLight}+0.03 & 0.00 & 0.00 & 0.00 \\
     & TPR@FPR$=$$10^{-2}$ & 0.13 & \textbf{0.22} & \cellcolor{GainMed}+0.09 & 0.02 & \textbf{0.06} & \cellcolor{GainLight}+0.04 & 0.01 & 0.01 & 0.00 \\
    \midrule
    \textsc{RTF} & AUC-ROC & \textbf{0.57} & 0.52 & \cellcolor{LossMed}$-$0.05 & 0.58 & 0.58 & 0.00 & \textbf{0.54} & 0.51 & \cellcolor{LossLight}$-$0.03 \\
     & TPR@FPR$=$0 & 0.00 & 0.00 & 0.00 & 0.00 & 0.00 & 0.00 & 0.01 & 0.01 & 0.00 \\
     & TPR@FPR$=$$10^{-3}$ & 0.00 & 0.00 & 0.00 & \textbf{0.01} & 0.00 & \cellcolor{LossLight}$-$0.01 & \textbf{0.02} & 0.01 & \cellcolor{LossLight}$-$0.01 \\
     & TPR@FPR$=$$10^{-2}$ & 0.02 & 0.02 & 0.00 & 0.02 & 0.02 & 0.00 & \textbf{0.05} & 0.03 & \cellcolor{LossLight}$-$0.02 \\
    \bottomrule
  \end{tabular}
\end{table*}

\subsubsection{Attacks}
Since prior work has not investigated MIAs in the context of synthetic database release, we adapt existing single-table attack methods that align with our threat model for benchmarking purposes. In our setting, membership inference on a user subgraph $h^*$ is equivalent to determining whether all nodes $v \in h^*$ were present in the training data. Per Theorem~\ref{thm:leakage}, this can be reframed as selecting 1 node for each subgraph in which to use a single-table attack on. We therefore select a parent node in each $h^*$ (the strategy of Figure~\ref{fig:motive_a}) and use its corresponding score as the attack score for $h^*$. 

We use three common No-box attacks for tabular data as these scoring functions: Distance to Closest Record (DCR) \cite{ganleaks}, a Monte Carlo Density Estimator (MC) proposed by \cite{Hilprecht2019MonteCA}, and a Kernel Density Estimator (KDE) approach used in \cite{houssiau2022tapas}.
\subsection{Metrics}
\label{sec:metrics}

\subsubsection{Attack Success}
\label{sec:attack_success}

We evaluate attack effectiveness following the framework established by \cite{Carlini2021MembershipIA}, which argues that membership inference attacks should be assessed by their behavior in the high confidence regime rather than by aggregate classification metrics. Our primary metric is the True Positive Rate at low False Positive Rates (TPR@FPR), reported at FPR levels of $0$, $10^{-3}$, and $10^{-2}$. We also report the Area Under the ROC Curve (AUC) for completeness and comparability with prior work, but treat it as a secondary measure.

A high TPR at a low FPR is the most relevant regime for privacy auditing. An adversary that can confidently identify even a small fraction of training records with few false positives poses a real privacy risk, while an adversary that must accept many false positives to flag the same number of true members cannot reliably attribute leakage to any specific record. Attacks that only separate members and non members on average, across the full operating curve, do not produce this kind of high confidence identification.
\subsubsection{Synthetic Data Fidelity}
\label{sec:fidelity}

We evaluate the quality of the multi table synthetic data using four metrics : (1) \textit{cardinality}, which measures the Foreign Key group size distribution to assess intra group correlations; (2) \textit{column wise density estimation (1 way)}, which estimates the marginal density of every column across all tables; (3) \textit{pairwise column correlation ($k$ hop)}, which assesses dependencies between columns at distance $k$ (for example, 0 hop for intra table and 1 hop for parent child relations); and (4) \textit{average $k$ hop}, which averages the $k$ hop correlation scores over $k \in \{0, 1, \dots, K\}$ for a schema of maximum join depth $K$, summarizing both intra table ($k = 0$) and cross table ($k \geq 1$) dependencies in a single fidelity score. For each measure following \cite{clavaddpm}, we report the complement of the Kolmogorov Smirnov (KS) statistic and Total Variation (TV) distance, normalized to $[0, 1]$ where 1 indicates perfect fidelity.

\section{Discussion}
\subsection{Privacy Auditing Multi-Table Synthetic Data}

\subsubsection{Performance of MIAs}
\label{sec:mia_performance}

MT-MIA's central advantage is that it scores membership over the full entity subgraph rather than a single row, which allows it to surface privacy violations that single table attacks are structurally incapable of detecting. The clearest evidence for this advantage appears in the high confidence regime that Section~\ref{sec:attack_success} identifies as the privacy relevant operating point of an MIA. Table~\ref{tab:mt-mia-vs-baseline} compares the mean best single table attack for each metric against MT-MIA. We find that MT-MIA delivers consistent and often substantial gains in TPR at low FPR across both ClavaDDPM and RelDiff.

The most striking pattern across our results is that MT-MIA reveals leakage where baselines detect none at all. On RelDiff with California, the strongest single table baseline achieves TPR@FPR$=$0 of 0.00, while MT-MIA reaches 0.15, a 15 percentage point gap that corresponds to confidently identifying roughly 150 training records with zero false positives in our evaluation. The same pattern holds on ClavaDDPM with California (0.00 to 0.07), ClavaDDPM with Airbnb (0.00 to 0.01), and ClavaDDPM with Airlines, where MT-MIA improves on the strongest baseline by 10 percentage points at FPR$=$0 and by 19 points at FPR$=$$10^{-2}$. In each of these settings, a practitioner relying on a single table audit would conclude that no high confidence privacy leakage is present; MT-MIA shows that this conclusion is wrong.

A notable pattern in these results is that incorporating relational data does not always improve AUC, yet consistently improves calibration in the high confidence regime. On several configurations, single table attacks achieve comparable or slightly higher AUC than MT-MIA while MT-MIA still detects strictly more leakage at low FPR. This suggests that the relational signal sharpens the high confidence end of the score distribution rather than uniformly separating members from non members.

A consistent property of MT-MIA across all three generators is that its scores track the inter-table signal each generator preserves. RealTabFormer, whose synthetic outputs lose substantial inter-table structure (Section~\ref{sec:privacy_fidelity}), produces small MT-MIA gains over single table baselines because the relational motifs MT-MIA targets are largely absent from $G_{\text{synth}}$. ClavaDDPM and RelDiff, which preserve relational structure faithfully, are where MT-MIA surfaces large gains. The attack is therefore well calibrated to the property it is designed to detect: it returns strong signal where inter-table leakage exists in the synthetic output and stays muted where it does not.
\begin{table*}[t]
  \centering
  \setlength{\tabcolsep}{5pt}
  \caption{Fidelity metrics (left) versus MT-MIA privacy leakage (right) for each (model, dataset). Values are $\text{mean}_{\pm\sigma}$ over three seeds. Higher fidelity values indicate greater quality synthetic data; higher MT-MIA values indicate greater privacy leakage. \textsc{RTF} has substantially worse fidelity but this also translates to less privacy leakage, while \textsc{ClavaDDPM} and \textsc{RelDiff} achieve strong fidelity and leak substantially under MT-MIA.}
  \label{tab:fid_results}

  \begin{tabular}{llccc@{\hspace{1.5em}}cc}
    \toprule
    Model & Dataset & \multicolumn{3}{c}{Fidelity} & \multicolumn{2}{c}{MT-MIA} \\
    \cmidrule(lr){3-5} \cmidrule(lr){6-7}
     &  & Col.~Shapes & Cardinality & Avg.~$k$-hop & AUC-ROC & TPR@FPR$=$$10^{-3}$ \\
    \midrule
    \textsc{ClavaDDPM} & California & $0.97_{\,\pm0.00}$ & $0.98_{\,\pm0.01}$ & $0.93_{\,\pm0.01}$ & $0.69_{\,\pm0.05}$ & $0.09_{\,\pm0.08}$ \\
     & Airbnb & $0.98_{\,\pm0.00}$ & $0.98_{\,\pm0.01}$ & $0.92_{\,\pm0.03}$ & $0.80_{\,\pm0.01}$ & $0.01_{\,\pm0.00}$ \\
     & Airlines & $0.99_{\,\pm0.00}$ & $0.99_{\,\pm0.00}$ & $0.97_{\,\pm0.00}$ & $0.66_{\,\pm0.01}$ & $0.21_{\,\pm0.06}$ \\
    \midrule
    \textsc{RelDiff} & California & $0.97_{\,\pm0.00}$ & $1.00_{\,\pm0.00}$ & $0.93_{\,\pm0.00}$ & $0.64_{\,\pm0.01}$ & $0.17_{\,\pm0.01}$ \\
     & Airbnb & $0.98_{\,\pm0.00}$ & $1.00_{\,\pm0.00}$ & $0.90_{\,\pm0.03}$ & $0.62_{\,\pm0.01}$ & $0.03_{\,\pm0.02}$ \\
     & Airlines & $0.95_{\,\pm0.01}$ & $1.00_{\,\pm0.00}$ & $0.93_{\,\pm0.01}$ & $0.49_{\,\pm0.01}$ & $0.00_{\,\pm0.00}$ \\
    \midrule
    \textsc{RTF} & California & $0.84_{\,\pm0.01}$ & $0.78_{\,\pm0.13}$ & $0.72_{\,\pm0.01}$ & $0.52_{\,\pm0.02}$ & $0.00_{\,\pm0.00}$ \\
     & Airbnb & $0.90_{\,\pm0.02}$ & $0.85_{\,\pm0.01}$ & $0.75_{\,\pm0.04}$ & $0.58_{\,\pm0.00}$ & $0.00_{\,\pm0.00}$ \\
     & Airlines & $0.87_{\,\pm0.01}$ & $0.06_{\,\pm0.00}$ & $0.80_{\,\pm0.01}$ & $0.51_{\,\pm0.01}$ & $0.01_{\,\pm0.01}$ \\
    \bottomrule
  \end{tabular}
\end{table*}

\subsubsection{Privacy versus Fidelity Tradeoff}
\label{sec:privacy_fidelity}

 We compare synthetic data fidelity against MT-MIA's detected leakage across all three generators in Table~\ref{tab:fid_results}. ClavaDDPM and RelDiff achieve uniformly strong fidelity, with column shape, cardinality, and average $k$ hop scores at or above 0.90 in nearly every cell. RealTabFormer's fidelity is substantially weaker for all metrics and datasets, particularly on Airlines where its cardinality score collapses to 0.06.

This fidelity gap maps directly onto leakage. Both high fidelity generators leak under MT-MIA: ClavaDDPM reaches TPR@FPR$=$$10^{-3}$ of 0.21 on Airlines, and RelDiff reaches 0.17 on California. RealTabFormer leaks substantially less, with TPR@FPR$=$$10^{-3}$ at or below 0.01 across all configurations. The relationship between fidelity and leakage is consistent with findings in the single table synthetic data literature: stronger preservation of intra and inter-table dependencies is correlated with greater privacy vulnerability \cite{meeatchi24, Synth_MIA}, and diffusion based generators in particular have been observed to be more susceptible to memorization based attacks than alternative architectures. Our results indicate that this relationship extends to the multi table setting and applies to both diffusion based architectures we evaluate.

The interpretation of RealTabFormer's low MT-MIA scores warrants care. Table~\ref{tab:mt-mia-vs-baseline} shows that single table baselines do detect parent table leakage on RealTabFormer (AUC$=$0.57 on California, 0.58 on Airbnb). What MT-MIA finds less of in RealTabFormer's outputs is faithful inter-table structure: with cardinality and average $k$ hop scores substantially below ClavaDDPM and RelDiff, the relational motifs MT-MIA scores against are likely not present in $G_{\text{synth}}$ for the attack to exploit. MT-MIA's effectiveness is therefore tied to the strength of the inter-table signal a generator preserves: the more faithfully a generator reproduces relational structure, the more calibrated MT-MIA becomes relative to single table baselines.

\begin{table}
\small
\centering
\caption{MIA metrics for intermediate and final embeddings found in MT-MIA for ClavaDDPM. We conduct DCR attacks on the single-table setting (Vanilla), the intermediate embeddings of MT-MIA $z_{parent}$ and $z_{context}$, and the final embeddings in the attack $z_{final}$. ClavaDDPM experiences privacy leakage for different datasets in different components of multi-table synthesis.}

\begin{tabular}{ll|cccc}

\hline
{Dataset} & {Metric} & {Vanilla} & {$z_{parent}$} & {$z_{context}$} & {$z_{final}$} \\
\hline
\multirow{4}*{California} & AUC & \textbf{0.781} & 0.768 & 0.650 & 0.746 \\
 & TPR@FPR=0 & 0.000 & 0.000 & \textbf{0.183} & 0.028 \\
 & TPR@FPR=$10^{-3}$ & 0.009 & 0.007 & \textbf{0.282} & 0.051 \\
 & TPR@FPR=$10^{-2}$ & 0.093 & 0.072 & \textbf{0.315} & 0.150 \\
\hline
\multirow{4}*{Airbnb} & AUC & 0.781 & 0.793 & 0.533 & \textbf{0.795} \\
 & TPR@FPR=0 & 0.000 & 0.000 & 0.000 & \textbf{0.002} \\
 & TPR@FPR=$10^{-3}$ & 0.006 & 0.006 & 0.001 & \textbf{0.009} \\
 & TPR@FPR=$10^{-2}$ & 0.056 & 0.061 & 0.013 & \textbf{0.094} \\
\hline
\multirow{4}*{Airlines} & AUC & 0.690 & \textbf{0.713} & 0.503 & 0.660 \\
 & TPR@FPR=0 & 0.055 & \textbf{0.356} & 0.002 & 0.155 \\
 & TPR@FPR=$10^{-3}$ & 0.089 & \textbf{0.365} & 0.007 & 0.245 \\
 & TPR@FPR=$10^{-2}$ & 0.122 & \textbf{0.411} & 0.014 & 0.308 \\
\hline

\end{tabular}
\label{tab:mtmia-decomp}
\end{table}
\subsection{Sources of Leakage: Node Versus Neighborhood}
\label{sec:sources}

MT-MIA's performance gains stem from its ability to expose distinct sources of privacy leakage that are inaccessible to single table attacks. The HGNN backbone produces three intermediate representations of an entity subgraph: the parent embedding $z_{\text{parent}}$, the relational context embedding $z_{\text{context}}$, and the fused embedding $z_{\text{final}}$. This decomposition allows us to probe which components of the relational structure contribute to membership distinguishability and to attribute MT-MIA's gains to specific leakage pathways rather than to increased model capacity alone.

To quantify the contribution of each component, we apply the same DCR attack independently to each embedding space and compare against a Vanilla single table DCR attack on the original parent table feature space. We report attack performance for the best runs on each dataset under ClavaDDPM in Table~\ref{tab:mtmia-decomp}.

On California, attacking $z_{\text{parent}}$ yields AUC and TPR@FPR values nearly identical to the Vanilla attack, indicating that parent level attributes alone do not provide a substantially stronger signal once embedded. In contrast, $z_{\text{context}}$ reveals a markedly stronger signal, with improvements of 18 percentage points at TPR@FPR$=$0 and 28 percentage points at TPR@FPR$=$$10^{-3}$ over Vanilla. This suggests that ClavaDDPM memorizes recurring relational motifs in child records, which remain invisible to single table attacks but become exploitable once relational neighborhoods are explicitly encoded.

The pattern inverts on Airlines and Airbnb, where the primary gains arise from $z_{\text{parent}}$ rather than $z_{\text{context}}$. On Airlines, $z_{\text{parent}}$ achieves a 30 percentage point increase in TPR@FPR$=$0 over Vanilla while $z_{\text{context}}$ uncovers little to no membership signal. On Airbnb the effect is more modest: $z_{\text{parent}}$ and $z_{\text{final}}$ both outperform Vanilla while $z_{\text{context}}$ alone provides little. We attribute these gains to the message passing mechanism of the HGNN, which aggregates information across relational edges during embedding construction; even when neighborhood nodes do not themselves encode a strong membership signal, message passing can amplify subtle parent level differences and reshape the geometry of the representation space.
Across all three datasets, attacking either $z_{\text{parent}}$ or $z_{\text{context}}$ in isolation can yield stronger attack performance than attacking $z_{\text{final}}$. However, under our threat model an adversary does not have a priori knowledge of which component contains the dominant membership signal for a given dataset or generative model. MT-MIA therefore relies on $z_{\text{final}}$ as a robust, model agnostic attack strategy that does not require such prior assumptions.

The availability of separate embedding channels provides a useful diagnostic for internal auditing and model development. By independently probing $z_{\text{parent}}$ and $z_{\text{context}}$, practitioners can identify whether privacy leakage primarily arises from memorization of parent attributes or from recurring relational motifs in child tables. This decomposition enables targeted mitigation strategies, such as regularizing parent representations when leakage concentrates in $z_{\text{parent}}$ or modifying relational modeling or sampling procedures when leakage is driven by neighborhood structure.

\begin{table*}[t]
\centering
\small
\caption{Child table attack comparison for ClavaDDPM (Mean $\pm$ Std). We deploy a single-table DCR against the children tables of each dataset and compare it to corresponding scores from MT-MIA targeting user graphs, exhibiting the ''weakest link'' effect implied by multi-table synthetic data release.}
\label{tab:child_attacks}
\begin{tabular}{l|cccc|cccc}
\toprule
\multirow{2}{*}{Dataset} & \multicolumn{4}{c|}{Child Table DCR} & \multicolumn{4}{c}{Child Table MT-MIA} \\
\cmidrule(lr){2-5} \cmidrule(lr){6-9}
 & AUC & TPR@FPR0 & TPR@FPR$10^{-3}$ & TPR@FPR$10^{-2}$ 
 & AUC & TPR@FPR0 & TPR@FPR$10^{-3}$ & TPR@FPR$10^{-2}$ \\
\midrule
Airbnb     & 0.51$ \pm $0.00 & 0.00$ \pm $0.00 & 0.00$ \pm $0.00 & 0.01$ \pm $0.00 
           & \textbf{0.79}$ \pm ${0.01} & 0.00$ \pm ${0.01} & 0.00$ \pm $0.01 & \textbf{0.09}$ \pm $0.01 \\
Airlines   & 0.51$ \pm $0.00 & 0.00$ \pm $0.00 & 0.00$ \pm $0.00 & 0.01$ \pm $0.00
           & \textbf{0.66}$ \pm ${0.02} & \textbf{0.12}$ \pm ${0.08} & 0.12$ \pm $0.08 & \textbf{0.21}$ \pm $0.13 \\
California & \textbf{0.75}$ \pm $0.00 & 0.00$ \pm $0.00 & 0.00$ \pm $0.00 & 0.04$ \pm $0.00
           & 0.69$ \pm ${0.05} & \textbf{0.04}$ \pm $0.04 & \textbf{0.06}$ \pm $0.05 & \textbf{0.13}$ \pm $0.11 \\
\bottomrule
\end{tabular}
\end{table*}
\subsection{Implied Privacy of User Items}
\label{sec:implied_privacy}
While the prior subsections audit privacy at the entity level, Theorem~\ref{thm:leakage} also implies a result about item level privacy: an item's privacy is bounded by its least private representation across all connected tables in the relational schema. To empirically validate this ``weakest link'' effect, we compare MT-MIA against single table Distance to Closest Record (DCR) attacks applied in isolation to the child tables of each dataset for ClavaDDPM in Table~\ref{tab:child_attacks}.
With the exception of California, single table DCR attacks detect little privacy leakage when applied to child tables. This is consistent with the i.i.d. assumption underlying these attacks, which treats each child observation as an independent sample and ignores the structural constraints imposed by the parent child join relationships. MT-MIA, by contrast, utilizes information for the entire subgraph and treats membership inference on child items as inference on the overall entity. On Airlines and California, MT-MIA substantially outperforms the single table baseline in the TPR@FPR regime, recovering leakage that the localized audit misses entirely. The implication is operationally important for relational synthetic data release: a user item that appears safe under a single table audit can leak privacy when its considered together with its neighborhood of parents and siblings.

\subsection{Limitations}
\label{sec:limitations}

Despite the efficacy of MT-MIA, several limitations warrant further discussion.

\paragraph{Threat Model Constraints} MT-MIA operates under a No-Box threat model in which the adversary possesses only the synthetic output $G_{\text{synth}}$ and the database schema. As discussed in Section~\ref{sec:threat_model}, this is the threat model that most closely matches how synthetic relational data is released in practice and is what makes MT-MIA model agnostic and dataset agnostic. The broader lesson of MT-MIA, however, is that learning an embedding of the full entity subgraph is what surfaces inter-table leakage, and this lesson should extend to less conservative threat models: a Calibrated No-Box or Shadow-Box attack that operates over learned subgraph embeddings rather than single rows would inherit the same advantage MT-MIA demonstrates over single table baselines. We see adapting these stronger attacks to the relational setting through learned representations as a natural direction for future work.

\paragraph{Disjoint Entity Subgraphs} Theorem~\ref{thm:leakage} and the auditing setup in Section~\ref{sec:auditing_subgraphs} assume that user entities decompose into disjoint connected subgraphs, which holds for schemas where each user's data is fully separable from every other user's. Schemas with shared reference tables, such as a \texttt{Products} table referenced by every user's transactions, or schemas with many to many relationships induce overlapping entity subgraphs in which the disjointness assumption does not hold cleanly. The MT-MIA encoder itself does not require disjointness, since the HGNN operates on whatever subgraph it is given; what changes is the auditing semantics, as a shared node in a member subgraph is necessarily also a node in some non member subgraph. Defining the appropriate unit of privacy under shared references and many to many joins is a modeling choice that depends on the auditor's goals, and we leave a systematic study of subgraph definition under these schemas to future work.

\paragraph{Sensitivity to Embedding Quality} As a representation learning based attack, the success of MT-MIA is intrinsically tied to the discriminative power of the latent space learned by the HGNN. Graph Neural Networks are known to be sensitive to structural noise and hyperparameter configurations such as learning rate, message passing depth, and pooling strategies \cite{10.1145/3366423.3380297, fey2023relationaldeeplearninggraph, yang2023simpleefficientheterogeneousgraph}. Our experiments suggest that MT-MIA remains robust across the schemas we evaluate, but this sensitivity warrants attention when applying the attack to new domains.

\paragraph{Signal Integration and Gated Fusion} The Dynamic Gating Unit adaptively weights $z_{\text{parent}}$ and $z_{\text{context}}$ but does not always yield a composite embedding that is more discriminative than the individual signals. As discussed in Section~\ref{sec:sources}, the raw parent or context vectors independently achieve stronger attack performance than the fused $z_{\text{final}}$ on several configurations. Under our threat model an adversary does not know in advance which channel will dominate for a given generator and dataset, so MT-MIA defaults to $z_{\text{final}}$ as a robust strategy, but a more principled fusion mechanism that reliably matches or exceeds the best individual channel is a direction for future work.

\section{Conclusion}
We present the first systematic study of user-level privacy auditing for synthetic relational data generation, demonstrating both theoretically and empirically that multi-table settings introduce privacy leakage at a user-level. Our proposed Multi-Table Membership Inference Attack (MT-MIA) leverages heterogeneous graph neural networks in a self-supervised manner to detect membership information leakage across connected entities without requiring generator access. Evaluation across multiple real-world datasets shows MT-MIA consistently improves upon existing single-table approaches, particularly in the critical low false-positive regime, revealing that state-of-the-art relational generators leak membership information under conservative threat models.

There are many directions for future work in this area. Efforts could focus on refining HGNN architectures to improve the fidelity of learned subgraph representations, which would likely enhance attack performance. Extending the embedding based approach of MT-MIA to less conservative threat models would also be valuable, particularly if specific architectures become popular for relational data generation. Additionally, developing user-level differentially private relational data generators would likely be valuable in protecting user privacy. Finally, this work opens up additional lines of inquiry in extending other privacy auditing paradigms such as link prediction and attribute inference to the relational data setting.

\section{Ethical Considerations}

This work develops an attack against released synthetic relational data and demonstrates that existing generators leak membership information at the user level. We consider the ethical implications across the relevant stakeholders.

\textbf{Stakeholders.} The primary stakeholders are individuals whose records appear in relational databases that may be released as synthetic data, particularly in domains where membership itself is sensitive (healthcare records, financial transactions, social services interactions). Secondary stakeholders include data curators who release synthetic relational data and assume that synthesis is sufficient to protect contributors, researchers developing synthetic relational data generators, and practitioners conducting privacy audits.

\textbf{Impact of the research process.} MT-MIA was developed and evaluated using publicly available datasets that were released for research purposes. No additional individual data was collected, and no real synthetic data deployments were attacked.

\textbf{Impact of publication.} Publishing MT-MIA presents a tension. The attack could in principle be used by an adversary against a released synthetic relational dataset, particularly one generated by ClavaDDPM or RelDiff, the generators we evaluate. However, the alternative of not publishing leaves data curators with no auditing tool for user level privacy in the relational setting and no awareness that single table audits underestimate the privacy risk of multi table releases. Our judgment is that the population of curators who would benefit from understanding this risk substantially exceeds the marginal capability publication provides to adversaries, who can already attempt single table attacks. We further argue that adversarial auditing is a precondition for the development of user level differentially private relational generators (Section~\ref{sec:limitations}), which would address the underlying vulnerability.

\textbf{Decision to publish.} We considered whether to disclose this vulnerability privately to authors of the evaluated generators before publication. We decided against this for two reasons: first, the vulnerability is structural rather than implementation specific, so no patch is available for individual generators; second, the affected population (data curators considering relational synthetic data release) is broad and not tied to any single project, making coordinated disclosure infeasible. We instead release MT-MIA as an auditing tool alongside the paper.


\bibliographystyle{ACM-Reference-Format}
\bibliography{main}

\appendix

\section{Proofs}
\label{app:proofs}
\setcounter{theorem}{0}
\begin{theorem}
Let $G_{\text{test}} = (V, E)$ be a graph that is the disjoint union of two subgraphs $G_{\text{member}}$ and $G_{\text{holdout}}$, where $V(G) = V(G_{\text{member}}) \cup V(G_{\text{holdout}})$ and $V(G_{\text{member}}) \cap V(G_{\text{holdout}}) = \emptyset$. Furthermore, there are no edges in $G$ connecting vertices between $G_{\text{member}}$ and $G_{\text{holdout}}$.
Let $h^* \subseteq G$ be a connected subgraph, and let $g \subseteq h^*$.

Then, if $g \subseteq G_{\text{member}}$, it follows that $h^* \subseteq G_{\text{member}}$. Likewise, if $g \subseteq G_{\text{holdout}}$, then $h^* \subseteq G_{\text{holdout}}$.
\end{theorem}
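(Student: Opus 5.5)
The plan is to reduce the statement to a vertex-labelling argument along paths in $h^*$. Define a side function $s: V(G) \to \{\text{member}, \text{holdout}\}$ by $s(u) = \text{member}$ iff $u \in V(G_{\text{member}})$. Because $V(G_{\text{member}})$ and $V(G_{\text{holdout}})$ partition $V(G)$, this is well defined. The hypothesis that no edge of $G$ joins the two parts says exactly that $s(u) = s(w)$ for every edge $(u, r, w) \in E$; edge direction and type play no role.

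First I will fix the nontrivial case. The statement is only meaningful when $g$ contains at least one vertex, since the empty subgraph lies in both parts; I will state this as a standing convention. So pick a vertex $v \in V(g)$. Since $g \subseteq G_{\text{member}}$, we have $s(v) = \text{member}$.

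Next, take an arbitrary $w \in V(h^*)$. Because $h^*$ is connected, viewed as an undirected graph by forgetting edge orientation, there is a path $v = u_0, u_1, \dots, u_k = w$ whose consecutive pairs are joined by edges of $h^* \subseteq G$. By induction on $i$, the invariance of $s$ across edges gives $s(u_i) = s(v)$ for all $i$. Hence $s(w) = \text{member}$, so $V(h^*) \subseteq V(G_{\text{member}})$.

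It then remains to pass from vertices to edges. Every edge of $h^*$ is an edge of $G$ with both endpoints in $V(G_{\text{member}})$. Because $G$ is the disjoint union of the two subgraphs, every edge of $G$ belongs to one of them, and an edge with both endpoints in $V(G_{\text{member}})$ cannot belong to $G_{\text{holdout}}$. Therefore it belongs to $G_{\text{member}}$, which gives $h^* \subseteq G_{\text{member}}$. The holdout claim follows by the symmetric argument with the roles of the two parts exchanged.

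I expect the main obstacle to be interpretive rather than technical. The argument needs two conventions made explicit: that "connected" means weakly connected for the directed, typed edges of Definition~\ref{def:db_as_hg}, and that "disjoint union" means $E(G) = E(G_{\text{member}}) \cup E(G_{\text{holdout}})$, so that $G_{\text{member}}$ is effectively induced on its vertex set. I will state both conventions at the outset together with the nonemptiness of $g$. Once they are in place, the path induction above is routine.
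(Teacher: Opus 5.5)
Your proposal is correct and follows essentially the same route as the paper: both take a path in the connected $h^*$ starting at a vertex of $g$ and use the absence of crossing edges. The paper phrases this as a contradiction, while you argue directly by induction along the path, and your explicit treatment of the nonempty-$g$ requirement and of the passage from vertices to edges fills two steps the paper leaves implicit: it simply asserts that $g$ has a vertex, and that $h^* \not\subseteq G_{\text{member}}$ forces a vertex of $h^*$ into $G_{\text{holdout}}$.
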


\begin{proof}
We prove the first statement; the second follows by symmetry.

Assume $g \subseteq G_{\text{member}}$ and suppose, for contradiction, that $h^* \not\subseteq G_{\text{member}}$. Then there exists at least one vertex $v \in V(h^*)$ such that $v \in V(G_{\text{holdout}})$.

Since $g \subseteq h^*$ and $g \subseteq G_{\text{member}}$, there exists at least one vertex $u \in V(g) \subseteq V(G_{\text{member}})$.

Since $h^*$ is connected, there must exist a path $P$ in $h^*$ from $u$ to $v$. As $u \in V(G_{\text{member}})$ and $v \in V(G_{\text{holdout}})$, this path must contain an edge $(v_i, v_{i+1})$ where $v_i \in V(G_{\text{member}})$ and $v_{i+1} \in V(G_{\text{holdout}})$.

However, by assumption, no edges exist in $G$ between vertices of $G_{\text{member}}$ and $G_{\text{holdout}}$. This contradiction proves that $h^* \subseteq G_{\text{member}}$.

The second statement follows by an identical argument with the roles of $G_{\text{member}}$ and $G_{\text{holdout}}$ reversed.
\end{proof}
\section{Notation}
\label{app:notation}

Table~\ref{tab:notation} summarizes the symbols introduced in Section~\ref{sec:methodology}.

\section{Experiments/ Reproducibility}

\subsection{MT-MIA Model Components}
The MT-MIA architecture consists of four distinct functional stages designed to balance local record features with global relational structure:

\begin{itemize}
    \item \textbf{Heterogeneous Message Passing (Encoder):} The core of the model utilizes two stacks of \texttt{GNNEncoder} layers, transformed via the \texttt{to\_hetero} utility. Each stack employs \textbf{GATv2} (Graph Attention Network v2) layers with a hidden dimension $d = 1024$, allowing the model to learn type-specific relationships across the relational schema.
    
    \item \textbf{Target and Context Isolation:} 
    \begin{itemize}
        \item \textit{Target Signal ($z_{\text{target}}$):} Node embeddings for the {target node type} (the record being audited) are isolated and processed via \texttt{LayerNorm}.
        \item \textit{Context Signal ($z_{\text{context}}$):} An attention-based global pooling mechanism aggregates features from all non-target node types, representing the "relational neighborhood" of the target record.
    \end{itemize}

    \item \textbf{Gated Fusion Mechanism:} To prevent over-reliance on either the target record or its context, a gating unit calculates a scalar $g \in [0, 1]$ via a sigmoid activation:
    \begin{equation}
        g = \sigma(W_{\text{gate}}[z_{\text{target}} \parallel z_{\text{context}}] + b_{\text{gate}})
    \end{equation}
    The final representation is computed as a gated residual connection:
    \begin{equation}
        z_{\text{final}} = z_{\text{target}} + (g \odot \text{Transform}(z_{\text{context}}))
    \end{equation}

    \item \textbf{Structural Anchors (Decoders):} Reconstruction heads for both target and context map hidden representations back to original feature dimensions. This acts as a structural anchor, ensuring the latent space preserves the physical characteristics of the data.
\end{itemize}

\subsection{Hyperparameter Configuration}
Table~\ref{tab:hyperparams} summarizes the primary architectural parameters used in the MT-MIA implementation.

\begin{table}[h]
\small
\centering
\caption{MT-MIA Architectural Hyperparameters}
\label{tab:hyperparams}
\resizebox{.5\textwidth}{!}{
\begin{tabular}{lll}
\hline
\textbf{Parameter} & \textbf{Value} & \textbf{Description} \\ \hline
\texttt{hidden\_channels} & 1024 & Dimensionality of the latent space. \\
\texttt{num\_conv\_stacks} & 2 & Number of het. message-passing blocks. \\
\texttt{conv\_operator} & GATv2 & Graph Attention Network v2 operator. \\
\texttt{aggregation} & Attn Pool & Weights and sums node embeddings. \\
\texttt{activation} & ReLU & Non-linear activation function. \\
\texttt{gate\_activation} & Sigmoid & Actv. for the context influence gate. \\ \hline
\end{tabular}%
}
\end{table}

\begin{table*}
\centering
\caption{Dataset links, configurations and sample sizes across experimental runs.}
\label{tab:dataset_configs}
\begin{tabular}{llcc}
\toprule
\textbf{Dataset} & \textbf{Schema} & \textbf{Training Size} & \textbf{Total Records} \\
\midrule
California & Households $\rightarrow$ \{Individuals\} & 1000 & 3,808 \\
\midrule
\href{https://www.kaggle.com/datasets/agungpambudi/airline-loyalty-campaign-program-impact-on-flights/data}{Airlines} & Loyalty History $\rightarrow$ Activity & 1000 & 10,220 \\
\midrule
\href{https://www.kaggle.com/competitions/airbnb-recruiting-new-user-bookings}{Airbnb} & Users $\rightarrow$ Sessions & 1000 & 24,552 \\
\bottomrule
\end{tabular}
\end{table*}
\subsection{Section 3 Experiment Details}

We demonstrate how inter-table dependencies can leak membership information through a controlled experiment with synthetic relational data. This section provides complete details on the experimental setup and methodology referenced in the main paper.

We constructed a database with two tables: Customers (parent) and Transactions (child), connected through a one-to-many relationship. Both tables contained entities with 8-dimensional feature vectors sampled from a standard multivariate Gaussian distribution $N(0,I)$. The database contained no additional attributes beyond these feature vectors and the necessary primary/foreign keys establishing relationships between tables.

For our experiment, we generated 1000 customer entities and established different relationship patterns between members and non-members. Specifically, customer entities in the training set ($G_{mem}$) were each associated with 100 transactions, resulting in 100,000 total transaction records. In contrast, customer entities in the test set ($G_{non\text{-}mem}$) were each associated with exactly 1 transaction, resulting in 1000 total transaction records. The synthetic data generator ($G_{synth}$) was trained to mimic the training set, preserving the same structural patterns and feature distributions.

The evaluation dataset was balanced with a 50:50 ratio between member and non-member records. The membership inference task involved determining whether a customer record belonged to the training data used to generate $G_{synth}$.

\subsection{Section 5 Experiment}

\subsubsection{Datasets}
We evaluated MT-MIA on three benchmark multi-table relational datasets with different schema structures and entity relationships, as summarized in Table \ref{tab:dataset_configs}.

\subsection{Membership Inference Attack Descriptions}
\textbf{Distance to Closest Record (DCR.} Distance-based membership inference attacks \cite{ganleaks} are based on the intuition that synthetic data models may memorize training examples, leading to synthetic samples that lie closer in feature space to training members than to non-members. The Distance to Closest Record (DCR) method \cite{ganleaks} formalizes this intuition by defining
$f_{\text{DCR}}(x^*,S) = -\min_{{x} \in S} d(x^*,{x})$,
where $d(\cdot,\cdot)$ is a chosen distance metric. 

\textbf{Density Estimation.} In line with the memorization hypothesis of \cite{ganleaks}, \cite{houssiau2022tapas} and \cite{vanbreugel2023membership} present a simple strategy of rather than computing a distance, instead estimating the density of $x^*$ over the synthetic dataset: $f_{\text{Density Estimate}}(x^*,S) = {p_S(x^*)}$ using a Kernel Density Estimator.

\textbf{Monte Carlo (MC).} The Monte Carlo attack \cite{Hilprecht2019MonteCA} probes overfitting by counting how often synthetic samples fall near a query. Defining the $\varepsilon$-neighborhood around $x^*$ as $U_\varepsilon(x^*) = {x' \mid d(x^*, x') \leq \varepsilon}$, the method estimates the probability mass in this region by drawing $n$ samples $s_1,\ldots,s_n$ from $S$ and computing
$f_{\text{MC}}(x^*,S) = \frac{1}{n} \sum_{i=1}^{n} \mathbb{I}(s_i \in U_\varepsilon(x^*))$.
\subsubsection{Methodology}
For our experiments, we defined a "user subgraph" as the complete disjoint subgraph centered around a single user entity (parent node) in the relational database, including all its connected child entities across tables. This approach allows us to sample coherent relational data structures that maintain referential integrity.

To ensure proper evaluation of membership inference, we constructed training and holdout sets by sampling entirely disjoint user subgraphs. This sampling strategy guarantees that no entity (whether parent or child) appears in both the training and holdout sets, eliminating any potential data leakage during evaluation. We only included users with at least one node in each table of the dataset's schema to ensure consistent relational structures across all sampled subgraphs.

For all experiments, we employed ClavaDDPM and RealTabFormer with default hyperparameters as implemented in the original paper. 
No dataset-specific modifications or hyperparameter tuning was performed, as our goal was to evaluate MT-MIA's effectiveness under standard synthetic data generation conditions rather than optimizing synthetic data quality for each specific dataset.

We applied consistent feature processing across all datasets to prepare the data for both synthetic generation and attack model training:

\textbf{Feature Selection}: We excluded uninformative features such as ID columns from the feature space, as these are already represented in the graph structure. Date-time variables and open text fields were also dropped due to their high dimensionality and sparsity.

\textbf{Categorical Features}: All categorical variables were ordinal encoded before training.

\textbf{Numeric Features}: All numeric features were scaled using standard scaling with parameters fit to the synthetic data and applied consistently across real and synthetic samples.

\textbf{Missing Values}: Missing values were ordinally  encoded with 0s for categorical data or the feature mean for numeric values.

\section{Additional Tables and Figures}

\begin{table*}[h]
\caption{Notation summary for Section~\ref{sec:methodology}.}

\centering
\small
\begin{tabular}{ll}
\toprule
\textbf{Symbol} & \textbf{Meaning} \\
\midrule
\multicolumn{2}{l}{\emph{Encoder (Sec.~\ref{sec:encoder})}} \\
$\mathcal{M}_\theta$ & Heterogeneous graph encoder with parameters $\theta$ \\
$L$ & Number of message passing layers \\
$H_l^{(t)}$ & Layer $l$ embeddings for nodes of type $t$ \\
$\Phi_l^{(t)}$ & Type specific message passing function at layer $l$ \\
$A^{(r)}$ & Adjacency matrix for relation type $r$ \\
$z_{\text{parent}}$, $z_{\text{context}}$, $z_{\text{final}}$ & Parent, relational context, and fused embeddings \\
$g$ & Gating vector in $[0,1]^d$ \\
$\sigma$ & Elementwise sigmoid activation \\
$\text{MLP}_{\text{gate}}$ & Multilayer perceptron producing the gate \\
$\varphi$ & Non linear transformation in the gated residual \\
$\odot$ & Hadamard (elementwise) product \\
\midrule
\multicolumn{2}{l}{\emph{Training (Sec.~\ref{sec:training})}} \\
$\mathcal{D}_\phi, \mathcal{D}_\psi$ & Parent and context reconstruction decoders \\
$\phi, \psi$ & Decoder parameters \\
$\mathcal{N}(i)$ & Neighbor set of node $i$ across all adjacent node types \\
$\lambda_p, \lambda_c$ & Reconstruction loss weights \\
$\mathcal{L}_{\text{recon}}$ & Composite reconstruction loss \\
$f(h^*)$ & Membership scoring function \\
\bottomrule
\end{tabular}
\label{tab:notation}
\end{table*}
\label{app:add-results}
\begin{table*}[t]
  \centering
  \small
  \setlength{\tabcolsep}{4pt}
  \caption{Per-seed mean and standard deviation for the comparison of MT-MIA against the best baseline attack. Values are reported as $\text{mean}_{\pm\sigma}$ over three seeds. For each (model, dataset, metric) row, \emph{Baseline} reports the score of the strongest baseline attack on that configuration. This is the full-precision version of Table~\ref{tab:mt-mia-vs-baseline}.}
  \label{tab:mt-mia-vs-baseline-full}
  \begin{tabular}{llcccccc}
    \toprule
    Model & Metric & \multicolumn{2}{c}{California} & \multicolumn{2}{c}{Airbnb} & \multicolumn{2}{c}{Airlines} \\
    \cmidrule(lr){3-4} \cmidrule(lr){5-6} \cmidrule(lr){7-8}
     &  & Baseline & MT-MIA & Baseline & MT-MIA & Baseline & MT-MIA \\
    \midrule
    \textsc{ClavaDDPM} & AUC-ROC & $0.79_{\,\pm0.01}$ & $0.69_{\,\pm0.05}$ & $0.79_{\,\pm0.01}$ & $0.80_{\,\pm0.01}$ & $0.69_{\,\pm0.00}$ & $0.66_{\,\pm0.01}$ \\
     & TPR@FPR$=$0 & $0.00_{\,\pm0.00}$ & $0.07_{\,\pm0.09}$ & $0.00_{\,\pm0.00}$ & $0.01_{\,\pm0.00}$ & $0.07_{\,\pm0.01}$ & $0.17_{\,\pm0.03}$ \\
     & TPR@FPR$=$0.001 & $0.01_{\,\pm0.00}$ & $0.09_{\,\pm0.08}$ & $0.01_{\,\pm0.00}$ & $0.01_{\,\pm0.00}$ & $0.08_{\,\pm0.01}$ & $0.21_{\,\pm0.06}$ \\
     & TPR@FPR$=$0.01 & $0.11_{\,\pm0.02}$ & $0.15_{\,\pm0.08}$ & $0.06_{\,\pm0.01}$ & $0.09_{\,\pm0.01}$ & $0.12_{\,\pm0.00}$ & $0.31_{\,\pm0.05}$ \\
    \midrule
    \textsc{RelDiff} & AUC-ROC & $0.67_{\,\pm0.01}$ & $0.64_{\,\pm0.01}$ & $0.57_{\,\pm0.06}$ & $0.62_{\,\pm0.01}$ & $0.51_{\,\pm0.00}$ & $0.49_{\,\pm0.01}$ \\
     & TPR@FPR$=$0 & $0.00_{\,\pm0.00}$ & $0.15_{\,\pm0.03}$ & $0.00_{\,\pm0.00}$ & $0.03_{\,\pm0.02}$ & $0.00_{\,\pm0.00}$ & $0.00_{\,\pm0.00}$ \\
     & TPR@FPR$=$0.001 & $0.01_{\,\pm0.00}$ & $0.17_{\,\pm0.01}$ & $0.00_{\,\pm0.00}$ & $0.03_{\,\pm0.02}$ & $0.00_{\,\pm0.00}$ & $0.00_{\,\pm0.00}$ \\
     & TPR@FPR$=$0.01 & $0.13_{\,\pm0.01}$ & $0.22_{\,\pm0.01}$ & $0.02_{\,\pm0.01}$ & $0.06_{\,\pm0.01}$ & $0.01_{\,\pm0.00}$ & $0.01_{\,\pm0.00}$ \\
    \midrule
    \textsc{RTF} & AUC-ROC & $0.57_{\,\pm0.01}$ & $0.52_{\,\pm0.02}$ & $0.58_{\,\pm0.00}$ & $0.58_{\,\pm0.00}$ & $0.54_{\,\pm0.00}$ & $0.51_{\,\pm0.01}$ \\
     & TPR@FPR$=$0 & $0.00_{\,\pm0.00}$ & $0.00_{\,\pm0.01}$ & $0.00_{\,\pm0.00}$ & $0.00_{\,\pm0.00}$ & $0.01_{\,\pm0.01}$ & $0.01_{\,\pm0.00}$ \\
     & TPR@FPR$=$0.001 & $0.00_{\,\pm0.00}$ & $0.00_{\,\pm0.00}$ & $0.01_{\,\pm0.00}$ & $0.00_{\,\pm0.00}$ & $0.02_{\,\pm0.00}$ & $0.01_{\,\pm0.01}$ \\
     & TPR@FPR$=$0.01 & $0.02_{\,\pm0.00}$ & $0.02_{\,\pm0.01}$ & $0.02_{\,\pm0.00}$ & $0.02_{\,\pm0.01}$ & $0.05_{\,\pm0.02}$ & $0.03_{\,\pm0.01}$ \\
    \bottomrule
  \end{tabular}
\end{table*}

\begin{figure*}
    \centering
    \includegraphics[width=1\linewidth]{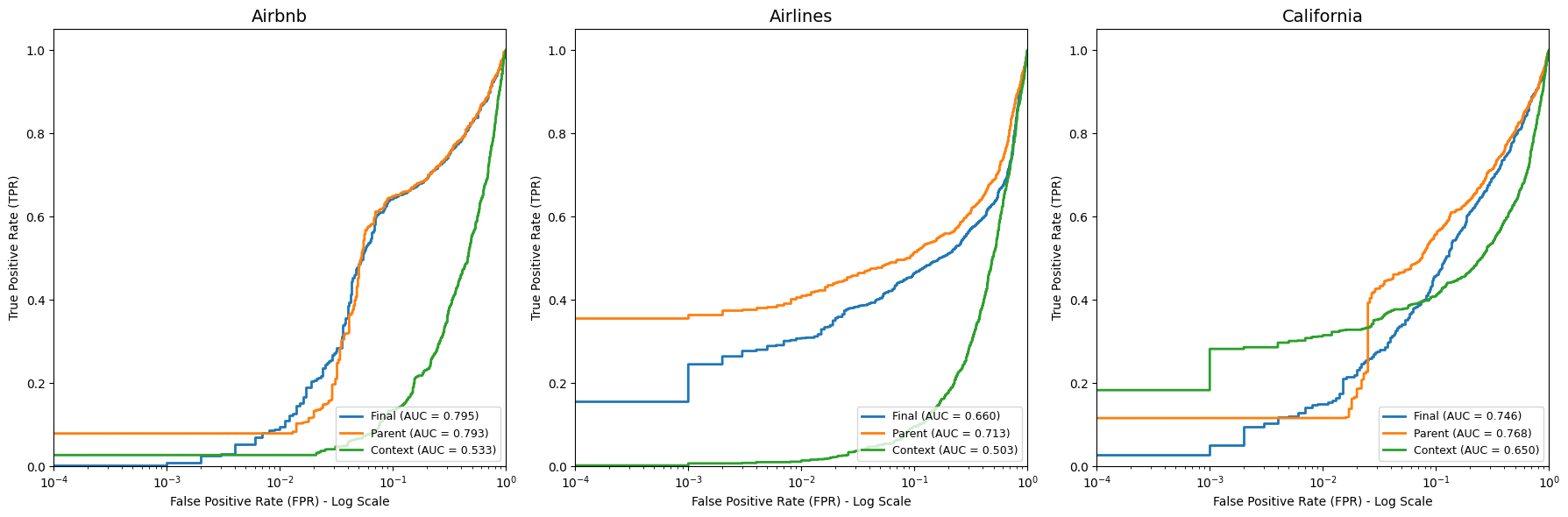}
    \caption{True Positive Rate by Log Scaled False Positive Rate for the most successful MT-MIA runs on ClavaDDPM. We plot the success of DCR utilizing the intermediate embeddings {$z_{parent}$} and  {$z_{context}$} as well as the final embedding {$z_{final}$}. While $z_{final}$ yields a competitive attack on high fidelity multi-table data, different datasets exhibit different sources of more severe privacy leakage that are capture in these intermediate latent spaces.}
    \label{fig:decomp}
    \Description{Three line plots side by side, one for each dataset: Airbnb, Airlines, and California. Each plot shows True Positive Rate on the y-axis (linear scale, 0 to 1) versus False Positive Rate on the x-axis (logarithmic scale, from 10 to the negative 4 to 1). Each plot contains three lines corresponding to attacks on different MT-MIA embeddings: Final (z-final), Parent (z-parent), and Context (z-context), with their AUC values reported in the legend. In the Airbnb plot, the Final and Parent lines (AUC 0.795 and 0.793) track closely together and rise from low True Positive Rate to near 1.0 across the False Positive Rate range, while the Context line (AUC 0.533) lies near the diagonal, indicating weak performance. In the Airlines plot, the Parent line (AUC 0.713) rises sharply at low False Positive Rate to a True Positive Rate near 0.4 and stays high across the range, the Final line (AUC 0.660) follows below it, and the Context line (AUC 0.503) lies near the diagonal. In the California plot, the Context line (AUC 0.650) rises sharply to a True Positive Rate near 0.2 at the lowest False Positive Rates and continues climbing, outperforming both the Final line (AUC 0.746) and the Parent line (AUC 0.768) in the low False Positive Rate region, despite having lower overall AUC.}
\end{figure*}
\end{document}